\documentclass[11pt, letterpaper]{article}

%% Packages & Macros
\usepackage[
letterpaper,
top=1.2in,
bottom=1.2in,
left=1in,
right=1in]{geometry}

\usepackage[dvipsnames]{xcolor}

\usepackage{amsthm, amsmath, amssymb}
\usepackage{mathtools}
\usepackage{tikz}
\usepackage{todonotes}
\usepackage{pgfplots}
\pgfplotsset{compat=1.18}

\usepackage{lipsum}
\usepackage{nicefrac}       % compact symbols for 1/2, etc.
\usepackage{microtype}      % microtypography  
\usepackage{tabularx}
\usepackage[T1]{fontenc}
\usepackage[full]{textcomp}
\usepackage[american]{babel}
\usepackage{booktabs}
\usepackage{xparse}

\usepackage{paralist}
\usepackage{mdframed}
\usepackage{tikz}
\usepackage{caption}
\usepackage{array}

\usepackage{newpxtext}
\usepackage{textcomp} % required for special glyphs
\usepackage[varg,bigdelims]{newpxmath}

\usepackage[colorlinks=true,
linkcolor=blue,
urlcolor=blue,
citecolor=purple,
linktocpage=true]{hyperref} 

\usepackage[capitalise,nameinlink]{cleveref}
\crefname{lemma}{Lemma}{Lemmas}
\crefname{fact}{Fact}{Facts}

\newtheorem{fact}{Fact}
\newtheorem{theorem}{Theorem}
\newtheorem{lemma}{Lemma}
\newtheorem{definition}{Definition}
\newtheorem{proposition}{Proposition}
\newtheorem{example}{Example}

\newcommand{\M}{\mathcal{M}}
\newcommand{\U}{\mathbf{U}}
\newcommand{\D}{\mathbf{D}}
\newcommand{\Lin}{\mathcal{L}}
\newcommand{\ptime}{\mathrm{P}}
\newcommand{\np}{\mathrm{NP}}
\newcommand{\fptime}{\mathrm{FP}}
\newcommand{\shp}{\mathrm{\#P}}
\newcommand{\Knapsack}{\textsc{Knapsack}}
\newcommand{\opt}{\operatorname{\textnormal{\textsc{Min}}}}

\DeclareMathOperator{\comp}{Comp}

\usepackage{algorithm}
\usepackage{algorithmic}

\usepackage{natbib}

\usepackage{mathabx}

%%%%% NEW MATH DEFINITIONS %%%%%

\usepackage{amsmath,amsfonts,bm}

% Mark sections of captions for referring to divisions of figures

% Highlight a newly defined term

% Figure reference, lower-case.

% Figure reference, capital. For start of sentence

% Section reference, lower-case.

% Section reference, capital.

% Reference to two sections.

% Reference to three sections.

% Reference to an equation, lower-case.
\def\eqref#1{equation~\ref{#1}}
% Reference to an equation, upper case

% A raw reference to an equation---avoid using if possible

% Reference to a chapter, lower-case.

% Reference to an equation, upper case.

% Reference to a range of chapters

% Reference to an algorithm, lower-case.

% Reference to an algorithm, upper case.

% Reference to a part, lower case

% Reference to a part, upper case

\def\1{\bm{1}}

% Random variables

% rm is already a command, just don't name any random variables m

% Random vectors

% Elements of random vectors

% Random matrices

% Elements of random matrices

% Vectors

\def\vw{{\bm{w}}}
\def\vx{{\bm{x}}}
\def\vy{{\bm{y}}}
\def\vz{{\bm{z}}}

% Elements of vectors

% Matrix

% Tensor
\DeclareMathAlphabet{\mathsfit}{\encodingdefault}{\sfdefault}{m}{sl}
\SetMathAlphabet{\mathsfit}{bold}{\encodingdefault}{\sfdefault}{bx}{n}

% Graph

% Sets

% Don't use a set called E, because this would be the same as our symbol
% for expectation.

% Entries of a matrix

% entries of a tensor
% Same font as tensor, without \bm wrapper

% The true underlying data generating distribution

% The empirical distribution defined by the training set

% The model distribution

% Stochastic autoencoder distributions

 % Laplace distribution

\newcommand{\E}{\mathbb{E}}

% Wolfram Mathworld says $L^2$ is for function spaces and $\ell^2$ is for vectors
% But then they seem to use $L^2$ for vectors throughout the site, and so does
% wikipedia.

 % See usage in notation.tex. Chosen to match Daphne's book.

% \newcommand{\vx}{\vec{x}}
% \newcommand{\vy}{\vec{y}}
% \newcommand{\vz}{\vec{z}}

% \newcommand{\E}{\mathbb{E}}

\usepackage{bm}
\linespread{1.05}
\usepackage{microtype}

%%%%%%%%%%%%%%%%%%%%%%%%%%%%%%%%%%%%%%%%%%%%%%%%%%%%%%%%%%%%%%%%%%%%%%%%%%%%%%%%
% FORMATTING
% \allowdisplaybreaks
% \newcommand{\FormatAuthor}[3]{
% \begin{tabular}{c}
% #1 \\ {\small\texttt{#2}} \\ {\small #3}
% \end{tabular}
% }
% \newcommand{\doclearpage}{%
% \iffull
% \clearpage
% \else
% \fi
% }

\newcommand{\FormatAuthor}[3]{%
    \centering
    \begin{tabular}{c}
    #1 \\
    {\small\texttt{#2}} \\
    {\small\shortstack[c]{#3}}%
    \end{tabular}%
}

%%%%%%%%%%%%%%%%%%%%%%%%%%%%%%%%%%%%%%%%%%%%%%%%%%%%%%%%%%%%%%%%%%%%%%%%%%%%%%%%

%%% End Packages & Macros

\title{Probabilistic Explanations for Linear Models\thanks{
    The authors
opted for randomized author ordering, denoted by \textcircled{r},
instead of alphabetical. A verifiable record of the randomization
is available at \protect\url{https://www.aeaweb.org/journals/policies/random-author-order/search}}}
\author{\centering
    \begin{tabular}[h!]{p{5cm} p{5cm} p{5cm}}
        \FormatAuthor{Bernardo Subercaseaux \textcircled{r}}{\url{bersub@cmu.edu}}{Carnegie Mellon University} &
        \FormatAuthor{Marcelo Arenas \textcircled{r}}{\url{marenas@ing.puc.cl}}{PUC Chile\\ IMFD Chile\\ RelationalAI} &
        \FormatAuthor{Kuldeep S. Meel \textcircled{r}}{\url{meel@cs.toronto.edu}}{Georgia Institute of Technology\\ University of Toronto}
  \end{tabular}
}

\begin{document}
\maketitle

\begin{abstract}
    Formal XAI is an emerging field that focuses on providing explanations with mathematical guarantees for the decisions made by machine learning models. A significant amount of work in this area is centered on the computation of ``sufficient reasons''. 
    Given a model $\M$ and an input instance $\vx$, a sufficient reason for the decision $\M(\vx)$ is a subset $S$ of the features of $\vx$ such that for any instance $\vz$ that has the same values as $\vx$ for every feature in $S$, it holds that $\M(\vx) = \M(\vz)$. 
    Intuitively, this means that the features in $S$ are sufficient to fully justify the classification of $\vx$ by $\M$.
    For sufficient reasons to be useful in practice, they should be as small as possible, and a natural way to reduce the size of sufficient reasons is to consider a probabilistic relaxation; the probability of $\M(\vx) = \M(\vz)$ must
    be at least some value $\delta \in (0,1]$, for a random instance $\vz$ that coincides with $\vx$ on the features in $S$.  Computing small $\delta$-sufficient reasons ($\delta$-SRs) is known to be a theoretically hard problem; even over decision trees — traditionally deemed simple and interpretable models — strong inapproximability results make the efficient computation of small $\delta$-SRs unlikely.
    We propose the notion of $(\delta, \epsilon)$-SR, a simple relaxation of $\delta$-SRs, and show that this kind of explanations can be computed efficiently over linear models. 
\end{abstract}

\section{Introduction}
\label{sec-into}
Explaining the decisions of Machine Learning classifiers is a fundamental problem in XAI (Explainable AI), and doing so with formal mathematical guarantees on the quality, size, and semantics of the explanations is in turn the core of \emph{Formal XAI}~\citep{formal-xai}. 
Within formal XAI, one of the most studied kinds of explanations is that of \emph{sufficient reasons}~\citep{Darwiche_Hirth_2020}, which aim to explain a decision $\M(\vx) = 1$ by presenting a subset $S$ of the features of the input $\vx$ that implies $\M(\vz) = 1$  for any $\vz$ that agrees with $\vx$ on $S$. 
In the language of theoretical computer science, these correspond to \emph{certificates} for $\M(\vx)$.

\begin{example}
Consider a binary classifier~$\M$ defined as 
	\[
	\M(\vx) = \left(x_1 \lor \overline{x_3}\right) \land   \left(x_2 \lor \overline{x_1}\right) \land \left(x_4 \lor x_3\right),
	\]
	and the input instance $\vx = \left( 1, \,  1, \, 0, \, 1 \right)$. We can say that $\M(\vx)$ ``because'' $x_1 = 1, x_2 = 1$, and $x_4 = 1$, as they are sufficient to determine the value of $\M(\vx)$ regardless of $x_3$.
	\label{ex:sufficient-reason}
\end{example}

Let us start formalizing the framework for our work.  First, we consider binary boolean models $\M\colon \{0, 1\}^d \to \{0, 1\}$. Despite our domain being binary, we will need a third value, $\bot$, to denote \emph{``unknown''} values.  For example, we may represent a person who \emph{does} have a car, \emph{does not} have a house, and for whom we do not know if they have a pet or not, as $\left(1, \, 0, \, \bot\right)$. 
We say elements of $\{0, 1, \bot\}^d$ are \emph{partial instances}, while elements of $\{0, 1\}^d$ are simply \emph{instances}. To illustrate, in~\Cref{ex:sufficient-reason} we used the partial instance $\vy = \left(1, \, 1, \, \bot, \, 1 \right)$ to explain $\M(\vx) = 1$.
We use the notation $\vy \subseteq \vx$ to denote that the (partial) instance $\vx$ \emph{``fills in''} values of the partial instance $\vy$; more formally, we use $\vy \subseteq \vx$ to mean that $y_i = \bot \lor y_i = x_i$ for every $i \in [d]$. Finally, for any partial instance $\vy$ we denote by $\comp(\vy)$ the set of instances $\vx$ such that $\vy \subseteq \vx$, thinking of $\comp(\vy)$ as the set of \emph{completions} of $\vy$. One can define sufficient reasons as follows with this notation.

\begin{definition}[Sufficient Reason~\citep{Darwiche_Hirth_2020}]
	We say $\vy$ is a \emph{sufficient reason} for $\vx$ if for any completion $\vz \in \comp(\vy)$ it holds that $\M(\vx) = \M(\vz)$.
	\label{def:sufficient-reason}
\end{definition}
A crucial factor for the helpfulness of sufficient reasons as explanations is their size; even though $\vx$ is always a sufficient reason for its own classification, we long for explanations that are much smaller than $\vx$ itself. \citet{millerMagicalNumberSeven1956}, for instance, goes on to say that explanations consisting of more than $9$ features are probably too large for human stakeholders. In general, empirical research suggests that explanations ought to be small~\citep{Narayanan_Chen_He_Kim_Gershman_Doshi-Velez_2018, Lage_Chen_He_Narayanan_Kim_Gershman_Doshi-Velez_2019}.
There are several ways of formalizing the succinctness we desire for sufficient reasons:

\begin{itemize}
    \item \textbf{(Minimum Size)} For a sufficient reason $\vy$, we define its \emph{explanation size} $|\vy|_e$ as the number of defined features in $\vy$, or equivalently, $|\vy|_e := d - |\vy|_\bot$, where $|\vy|_\bot$ is the number of features of $\vy$ taking $\bot$. See e.g.,~\cite{NEURIPS2020_b1adda14}.\footnote{When talking about a partial instance $\vy$, we will use the ``size'' of $\vy$ to mean $|\vy|_e$.}
    \item \textbf{(Subset minimality)} We say a sufficient reason $\vy$ for a pair $(\M, \vx)$ is \emph{minimal} if there is no other sufficient reason $\vy'$ for $(\M, \vx)$ such that $\vy' \subsetneq \vy$. In fact, the original definition of sufficient reasons of~\citet{Darwiche_Hirth_2020} includes minimality as a requirement, and so is the case under the \emph{``abductive explanation''} naming~\citep{Ignatiev_Narodytska_Asher_Marques-Silva_2021}.
    \item \textbf{(Relative to average explanation)} \citet{blanc2021provably} compute explanations that are small relative to the \emph{``certificate complexity''} of the classifier $\M$, meaning the average size of the minimum sufficient reason where the average is taken over all possible instances $\vx$.
\end{itemize}

Nevertheless, there is a path toward even smaller explanations: \emph{probabilistic} sufficient reasons~\citep{Waldchen_MacDonald_Hauch_Kutyniok_2021, Izza_Huang_Ignatiev_Narodytska_Cooper_Marques-Silva_2023}. 
As will be shown in Example 2., and is noted as a remark by~\citet{blanc2021provably}, these can be arbitrarily smaller than minimum size sufficient reasons.

\section{Probabilistic Sufficient Reasons}
\label{sec-prop-SR}
The main idea of probabilistic sufficient reasons is to relax the condition \emph{``all completions of the explanation $\vy$ have the same class as $\vx$''} to \emph{``a random completion of $\vy$ has the same class as $\vx$ with high probability''}.

% meaning that an input $\vx$ has probability $\D(\vx)$ of arising. For example, if one were to deploy an algorithm for handwritten-digit classification over $28 \times 28$ binary images (as in a binarized version of MNIST~\cite{deng2012mnist}), then not all $2^{784}$ inputs would be equally likely to appear; those that \emph{look like digits} would be much more likely.
% When considering a partial instance $\vy$, we define $\D_{\vy}$ as the distribution $\D$ conditioned on being a completion of $\vy$, which formally means that 
% \[
% 	\D_{\vy}(\vz) = \frac{\D(\vz)}{\sum_{\vw \in \comp(\vy)} \D(\vw)},
% \]
% for any $\vz \in \comp(\vy)$, and $\D_{\vy}(\vv) = 0$ if $\vv \not\in \comp(\vy)$.
 Let us use notation $\vz \sim \U(\vy)$ to denote that $\vz$ is a completion of $\vy$ drawn uniformly at random.
With this notation we can define $\delta$-sufficient reasons:\footnote{Also known as $\delta$-relevant sets~\cite{Izza2021EfficientEW,Waldchen_MacDonald_Hauch_Kutyniok_2021}.}
\begin{definition}[\cite{Waldchen_MacDonald_Hauch_Kutyniok_2021}]
	For any $\delta \in [0, 1]$, a $\delta$-sufficient reason ($\delta$-SR) for an instance $\vx$, is a partial instance $\vy \subseteq \vx$ such that
	\[
		\Pr_{\vz \sim  \U(\vy)}\big[\M(\vz) = \M(\vx)\big] \geq \delta.
	\]
	\label{def:delta-SR}
\end{definition}
Naturally, a minimum $\delta$-SR is a $\delta$-SR of minimum size. Note immediately that~\Cref{def:delta-SR} and~\Cref{def:sufficient-reason} coincide when $\delta = 1$.
\subsection{\texorpdfstring{The size of $\delta$-SRs}{The size of delta-SRs}}

Interestingly, even a $0.999999$-SR can be arbitrarily smaller, in terms of defined features, than the smallest sufficient reason (i.e., $1$-SR) for a pair $(\M, \vx)$, even when $\M$ is a linear model, as we will illustrate in~\Cref{ex:delta-sr-size}. Before providing the example, let us define linear models.
%  and a concentration bound that will be used throughout the paper.

\begin{definition}
	A (binary) linear model $\Lin$ of dimension $d$ is a pair $(\vw, t)$, where $\vw \in \mathbb{Q}^d$ and $t \in \mathbb{Q}$. Its classification over an instance $\vx$ is defined simply as 
	\[
		\Lin(\vx) = \begin{cases}
			1 & \text{if } \vx \cdot \vw \geq t\\
			0 & \text{otherwise}.
		\end{cases}
	\]
	\label{def:linear-models}
\end{definition}

% \begin{lemma}[Chernoff-Hoeffding bound]
% Let $X$ be a finite sum of independent Bernoulli variables, with $\E[X] = \mu > 0$. Then, for any $t \geq 0$, we have
% \[
% \Pr \Big[ \left|X - \mu\right| \geq t \Big] \leq 2\exp\left(\frac{-t^2}{3 \mu} \right).
% \] 

% \label{lemma:chernoff}	
% \end{lemma}

\begin{example}
Consider a linear model $\Lin$ of dimension $d= 1000$ with parameters $t = 1250$ and
$$
	\vw = (
		1000, \, 1, \, 1, \,  1, \,  1, \,  \ldots, \, 1
	).
$$
Let the instance $\vx$ be 
	$(
		1, \, 1, \, 1, \,  1, \,  1, \,  \ldots, \, 1
	),$ so that clearly $\Lin(\vx) = 1$.
One can easily see that any $1$-SR for $\vx$ under $\Lin$ has size $251$, as it must include the first feature and any $250$ other features.
However, if we consider $\vy = (
		1, \, \bot, \, \bot, \,  \bot,  \ldots, \, \bot
	)$, then a simple application of the Chernoff-Hoeffding concentration bound (in the appendix for the completeness) gives that
\[
	\Pr_{\vz \sim \U(\vy)}\Big[ \Lin(\vz) = 1\Big] \geq  0.999999.\]
	This suggests that we might say $\Lin(\vx) = 1$ ``because'' $x_1 = 1$; formally, $\vy$ is a $0.999999$-SR, and $251$ times smaller than any $1$-SR for $\Lin(\vx)$.
\label{ex:delta-sr-size}
\end{example}

% It is useful to keep in mind that, in general, the size of minimum $\delta$-SRs is increasing over $\delta$; namely, if $\delta_1 < \delta_2$, then 
% \[
% 	\opt(\M, \vx, \delta_1) \leq \opt(\M, \vx, \delta_2),
% \]
% for any model $\M$ and instance $\vx$, since any $\delta_2$-SR is also a $\delta_1$-SR. 

%\section{Computational Problem}
\section{Approximating \texorpdfstring{$\delta$}{delta}-Sufficient Reasons}
\label{sec-comp-problem}

Unfortunately, computing small $\delta$-SRs is computationally challenging, even when attempting to find approximate solutions. Let us contextualize our main result by summarizing first what is known about the complexity of computing $\delta$-SRs and their deterministic predecessors, $1$-SRs.  

\citet{NEURIPS2020_b1adda14} showed that computing a minimum $1$-SR is $\Sigma_2^p$-hard for neural networks, $\np$-hard for decision trees, and polynomial-time solvable for linear models. Then, \citet[Theorem 2.4]{Waldchen_MacDonald_Hauch_Kutyniok_2021} showed that computing minimum $\delta$-SRs for neural networks is hard for $\np^{\mathrm{PP}}$, and \citet{NEURIPS2022_b8963f6a} proved that even for the restricted class of decision trees, which are usually considered interpretable, minimum $\delta$-SRs cannot be computed in polynomial time unless $\ptime = \np$ (and neither can subset-minimal $\delta$-SRs for $\delta < 1$, in contrast to the $\delta = 1$ setting which is in $\ptime$~\citep{izzaExplainingDecisionTrees2020,roaModelInterpretabilityLens2020}). 
For linear models, even computing the value
\[
    \Pr_{\vz \sim \U(\vy)}\big[\Lin(\vz) = \Lin(\vx)\big]
\]
exactly is $\shp$-hard~\citep{NEURIPS2020_b1adda14}, from where the following is easy to show.\footnote{\citet{izzaComputingProbabilisticAbductive2023} already made a more general observation of this form, but did not provide a hardness result for linear models.}\footnote{All proofs that are not presented in the main text can be found in the appendix.}
 
\begin{proposition}\label{prop:hardness}
    Given a linear model $\Lin$, an instance
    $\vx$, and $\delta \in [0,1]$, the size of the smallest $\delta$-SR for $(\Lin, \vx)$
    cannot be computed in polynomial time unless $\fptime = \shp$.
\end{proposition}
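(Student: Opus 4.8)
The plan is a polynomial-time Turing reduction from the $\shp$-hard problem of computing the value $p := \Pr_{\vz\sim\U(\vy)}[\Lin(\vz)=\Lin(\vx)]$ for a linear model, which the excerpt already recorded as $\shp$-hard. I first want to observe that this hardness holds already when $\vy$ is the fully undefined partial instance $\bot^d := (\bot,\dots,\bot)$. Indeed, given a \Knapsack\ instance with nonnegative integer weights $\va\in\sN^d$ and target $b\in\sN$, setting $\Lin := (-\va,-b)$ and $\vx := \vzero$ gives $\Lin(\vx)=1$ and $\Pr_{\vz\sim\U(\bot^d)}[\Lin(\vz)=\Lin(\vx)] = 2^{-d}\,|\{\vz\in\{0,1\}^d : \va\cdot\vz\le b\}|$, so an algorithm computing $p$ on such inputs would count \Knapsack\ solutions; since \#\Knapsack\ is $\shp$-complete, computing $p$ is $\shp$-hard even when restricted to inputs with $\vy=\bot^d$.

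The heart of the argument is a one-line equivalence requiring no gadget at all. Fix a linear model $\Lin$ and an instance $\vx$, and let $p := \Pr_{\vz\sim\U(\bot^d)}[\Lin(\vz)=\Lin(\vx)]$, i.e., the fraction of the Boolean cube on which $\Lin$ agrees with $\Lin(\vx)$. The unique partial instance $\vy\subseteq\vx$ with $|\vy|_e=0$ is $\bot^d$ itself, and by \Cref{def:delta-SR} it is a $\delta$-SR for $(\Lin,\vx)$ exactly when $p\ge\delta$. Consequently the size of the smallest $\delta$-SR for $(\Lin,\vx)$ equals $0$ when $p\ge\delta$ and is at least $1$ when $p<\delta$.

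Now suppose, towards a contradiction, that the smallest-$\delta$-SR-size function lies in $\fptime$, via an algorithm $A$. Given $(\Lin,\vx)$ of dimension $d$, the value $p$ is an integer multiple of $2^{-d}$ in $[0,1]$, so I would binary-search for $p$ over the grid $\{2^{-d}, 2\cdot 2^{-d},\dots,1\}$: at each step I invoke $A(\Lin,\vx,\delta)$ (no new linear model is ever constructed) and read off from the equivalence above whether $p\ge\delta$. This pins down the exact value of $p$ after $O(d)$ calls to $A$ — each on an input of size polynomial in that of $(\Lin,\vx)$ — declaring $p=0$ if every call returns a nonzero size. Composing this with the reduction of the first paragraph yields a polynomial-time algorithm for \#\Knapsack, hence $\fptime=\shp$.

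I do not anticipate a genuine obstacle. The two points that need care are (i) ensuring the $\shp$-hardness of computing $p$ is instantiated on inputs with $\vy=\bot^d$, which the \Knapsack\ reduction above handles, and (ii) stating the size-$0$ equivalence precisely, which relies only on the trivial fact that there is exactly one partial instance of explanation size $0$. The binary search and the check that all intermediate objects remain polynomial-size are routine.
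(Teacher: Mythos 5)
Your proof is correct and follows essentially the same route as the paper: both arguments hinge on the observation that the unique size-$0$ partial instance $\bot^d$ is a $\delta$-SR exactly when $\Pr_{\vz\sim\U(\bot^d)}[\Lin(\vz)=\Lin(\vx)]\geq\delta$, encode a $\#\Knapsack$ count into that probability, and recover the count by querying the minimum-size oracle at thresholds $\delta=m/2^d$ (your binary search is just an efficient packaging of the paper's reduction from the decision problem $\#\Knapsack\geq m$). The only difference is cosmetic: you negate the weights and explain $\vx=\vzero$, whereas the paper keeps positive weights, explains the all-ones instance with threshold $T+1$, and passes to complements of knapsack solutions.
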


Furthermore, the situation does not improve if we aim to efficiently approximate the value $\opt(\M, \vx, \delta)$. \citet[Theorem 2.5]{Waldchen_MacDonald_Hauch_Kutyniok_2021} studied general classifiers (e.g., neural networks) and showed that no algorithm can achieve an approximation factor of $d^{1-\alpha}$ for this problem, where $d$ is the dimension of the classifier and $\alpha > 0$, unless $\ptime = \np$. \citet{Kozachinskiy_2023} proved that this approximation task is also hard for decision trees.

However, these hardness results do not preclude the existence of
efficient algorithms for computing or approximating $\delta$-SR for
linear models. Hence, the goal of this section is to explore these
questions for such models, given their practical importance.

\subsection{\texorpdfstring{A Simple Relaxation: $(\delta, \varepsilon)$-min-SR}{A Simple Relaxation: (delta, epsilon)-min-SR}}

In light of the hardness results for $\delta$-SRs, it is natural to consider a further relaxation that would allow for tractability. Consider for instance a customer of a bank who wants a $0.95$-SR for why their application for a loan was rejected. Such an explanation would consist of a small number of features of their application profile that are relevant to the decision since $95\%$ of applicants with such a profile would also get rejected. We expect that, in such a scenario, the user would not particularly care if the explanation she obtains holds 
for $95\%$ of potential applicants or for $94.9997\%$ of them. In other words, the value of $\delta$ is chosen in a trade-off between the size of the explanation and the desired level of confidence or ``explanation power''. We posit that in such a trade-off, the user is more sensitive to increases in the explanation size than they are to a minor perturbation in $\delta$, the probability guarantee.  This motivates the following definition:
% \begin{itemize}
%     \item \textbf{Approximation in terms of $\delta$}: Given a linear model $\Lin$, an instance $\vx$, and a value $\delta$, we want to compute a $\delta'$-SR of size $k^\star(\Lin, \vx, \delta')$ such that $\delta'$ is close to $\delta$. Intuitively, this corresponds to the idea of stakeholders not caring about the exact value of $\delta$; e.g., $90\%$ of completions of $\vy$ agree with $\vx$ has roughly the same human implications as $89.9997\%$.
%     \item \textbf{Approximation in terms of $k^\star$}: Given a linear model $\Lin$, an instance $\vx$, and a value $\delta$, we want to compute a $\delta$-SR of size $k$ such that $k$ is not much bigger than $k^\star(\Lin, \vx, \delta)$. Intuitively, even though stakeholders want \emph{small} explanations, it is not required to find the smallest possible explanation.
% \end{itemize}

% Furthermore, we define a slightly different sense in which the value of $\delta$ can be relaxed:

\begin{definition}[$(\delta, \varepsilon)$-min-SR]
    Given a model $\M$, an instance $\vx$, and values $\delta, \varepsilon \in (0, 1)$, we say a partial instance $\vy$ of size is a $(\delta, \varepsilon)$-min-SR if there exists a value $\delta^\star \in [\delta - \varepsilon, \delta + \varepsilon]$ such that
    $\vy$ is a minimum $\delta^\star$-SR for $\vx$ under $\M$.
\end{definition}

Note that, even though the guarantee of a $(\delta, \varepsilon)$-min-SR is symmetric around $\delta$, our definition is such that the ability of efficiently computing $(\delta, \varepsilon)$-min-SRs is enough for the following two tasks:
\begin{enumerate}
    \item A user wants an explanation as small as possible and of probability ``close'' to $\delta$. Then, by computing a~$(\delta-\varepsilon/2, \varepsilon/2)$-min-SR, they obtain an explanation whose probability guarantee is at most $\varepsilon$ away from $\delta$, and is no larger in size than the minimum $\delta$-SR.
    \item The owner of the model wants to offer a $\delta$-SR that is as small as possible to a customer, and they want to be strict on the $\delta$ part, since offering a $(\delta - \varepsilon)$-SR would be misleading and could lead to legal issues. Then, by computing a $(\delta+\varepsilon/2, \varepsilon/2)$-min-SR, they can guarantee that the explanation is at least $\delta$-SR, while still being likely much smaller than a minimum $1$-SR. 
\end{enumerate}

The inapproximability result of~\citet{Kozachinskiy_2023} can be translated to the $(\delta, \varepsilon)$-min-SR problem as follows:
\begin{theorem}[\citet{Kozachinskiy_2023}, Theorem 1]
    Unless $\mathrm{SAT}$ can be solved in quasi-polynomial time, one cannot compute a $(\delta, \varepsilon)$-min-SR for decision trees in polynomial time, and furthermore, any polynomial-time algorithm that guarantees to provide a $\delta'$-SR for some $\delta' \in [\delta-\varepsilon, \delta+\varepsilon]$ will produce explanations that are up to $\Omega(d^{1-\alpha})$ times larger than any $(\delta, \varepsilon)$-min-SR, for any $\alpha > 0$.
\end{theorem}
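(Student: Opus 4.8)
The plan is to obtain this statement by translating Kozachinskiy's inapproximability theorem for \emph{ordinary} minimum $\delta$-SR over decision trees, i.e.\ by showing that the relaxation to the window $[\delta-\varepsilon,\delta+\varepsilon]$ buys nothing. Recall that \citet{Kozachinskiy_2023} supplies a polynomial-time reduction from SAT producing, for each $\alpha>0$, a decision tree $T_\varphi$, an instance $\vx_\varphi$ of dimension $d$, an integer $s$, and a threshold $\delta_0$, such that if $\varphi$ is satisfiable then $(T_\varphi,\vx_\varphi)$ has a $\delta_0$-SR of size at most $s$, while if $\varphi$ is unsatisfiable then every $\delta_0$-SR of $(T_\varphi,\vx_\varphi)$ has size more than $s\cdot d^{1-\alpha}$. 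Two elementary facts steer the translation. First, the size of a minimum $\delta^\star$-SR is nondecreasing in $\delta^\star$, so the smallest size attainable by a $(\delta,\varepsilon)$-min-SR of any pair equals the minimum $(\delta-\varepsilon)$-SR size. Second, since a $(\delta,\varepsilon)$-min-SR is by definition a \emph{minimum} $\delta^\star$-SR for \emph{some} $\delta^\star$ in the window, an algorithm solving the problem is free to answer with respect to whichever $\delta^\star$ it finds convenient; so to turn such an algorithm against us we need the constructed instances to be \emph{window-uniform}, meaning that the minimum $\delta^\star$-SR size is the same for every $\delta^\star\in[\delta-\varepsilon,\delta+\varepsilon]$.

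Window-uniformity is where the real content lies, and I would extract it from the shape of Kozachinskiy's instances: in the satisfiable case the witnessing partial instance of size at most $s$ is an honest certificate (probability exactly $1$, hence a $\delta^\star$-SR for \emph{every} $\delta^\star$), while in the unsatisfiable case the reduction leaves a margin, so that every partial instance of size at most $s\cdot d^{1-\alpha}$ has probability at most $\delta_0-\mu$ for a fixed $\mu>0$. Taking $\delta=\delta_0$, for any $\varepsilon<\mu$ the instances are window-uniform, with common minimum $\delta^\star$-SR size at most $s$ in the satisfiable case and more than $s\cdot d^{1-\alpha}$ in the unsatisfiable case. To realize an arbitrary target $\delta$, or an $\varepsilon$ not below $\mu$, I would precompose with a gadget on $k=O(\log(1/\varepsilon))$ fresh variables (for instance guarding $T_\varphi$ by a threshold gate $[\sum_i b_i\ge\theta]$ on the fresh variables, all set to $1$ in the instance), with $k$ and $\theta$ tuned so that the probability spectrum $\{\Pr_{\vz\sim\U(\vy)}[T'_\varphi(\vz)=T'_\varphi(\vx'_\varphi)] : \vy\subseteq\vx'_\varphi\}$ avoids an open neighborhood of $[\delta-\varepsilon,\delta+\varepsilon]$ and so that a minimum $\delta^\star$-SR never fixes a gadget variable; this shifts the effective threshold into the window, preserves minimum-SR sizes up to an additive constant, and inflates the dimension only by an additive $O(\log(1/\varepsilon))$.

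Given window-uniformity, both parts follow mechanically. For the non-computability claim, a polynomial-time algorithm returning a $(\delta,\varepsilon)$-min-SR of the constructed instance outputs a partial instance whose size is exactly the common minimum $\delta^\star$-SR size, which decides the gap problem and hence SAT with only polynomial overhead on top of the reduction; since Kozachinskiy's instances are hard under the quasi-polynomial-time hypothesis on SAT, so are ours. For the inapproximability claim, let $A$ be a polynomial-time algorithm that on the constructed instance always returns a $\delta'$-SR with $\delta'\in[\delta-\varepsilon,\delta+\varepsilon]$; in particular it returns a $(\delta-\varepsilon)$-SR, and by the spectrum-avoidance property (or directly, via the margin $\mu$ when no gadget is used) its output, restricted to the original coordinates, is a $\delta_0$-SR of $(T_\varphi,\vx_\varphi)$ of size at most that of the output. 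If the output were always within a factor $o(d^{1-\alpha})$ of the minimum $(\delta,\varepsilon)$-min-SR size --- which by the first elementary fact and window-uniformity equals the minimum $\delta_0$-SR size of $(T_\varphi,\vx_\varphi)$ --- then $A$ composed with this restriction would be a polynomial-time $o(d^{1-\alpha})$-approximation of minimum $\delta$-SR over decision trees, contradicting \citet{Kozachinskiy_2023}; hence on infinitely many (satisfiable) instances the output exceeds any $(\delta,\varepsilon)$-min-SR by a factor $\Omega(d^{1-\alpha})$. Passing from $d$ to the gadgeted dimension $d+O(\log(1/\varepsilon))$ is absorbed by re-parameterizing $\alpha$.

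The step I expect to be the obstacle is establishing window-uniformity rigorously: one must actually inspect Kozachinskiy's construction to exhibit the margin $\mu$ (or to verify that a threshold $\delta_0$ with a spectrum gap wider than $2\varepsilon$ is available), and, in the gadget variant, carry out the trade-off analysis showing that buying extra acceptance probability inside the gadget is never cheaper than tightening the explanation on the original coordinates --- so that the gadget creates no probabilities inside $[\delta-\varepsilon,\delta+\varepsilon]$ and costs at most a constant number of extra fixed features. Everything else --- unwinding the definition of $(\delta,\varepsilon)$-min-SR, the monotonicity of minimum $\delta^\star$-SR size in $\delta^\star$, and checking that the polylogarithmic dimension overhead leaves the $d^{1-\alpha}$ bound intact --- is routine bookkeeping.
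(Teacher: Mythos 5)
First, a point of calibration: the paper does not actually prove this statement. It is presented as an imported result --- a restatement of \citet{Kozachinskiy_2023}, Theorem 1, translated into the $(\delta,\varepsilon)$-min-SR vocabulary --- and no proof appears in the main text or the appendix. So there is no in-paper argument to compare yours against; what can be assessed is whether your reconstruction would actually close the distance between Kozachinskiy's statement about minimum $\delta$-SRs and the statement about $(\delta,\varepsilon)$-min-SRs as written here.

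Your reduction skeleton is the right one, and your two ``elementary facts'' are correct; indeed you are asking for slightly more than you need. Full window-uniformity is unnecessary: since the minimum $\delta^\star$-SR size is nondecreasing in $\delta^\star$, and the satisfiable-case witness has probability $1$ (hence is a $\delta^\star$-SR for every $\delta^\star$), it suffices that in the unsatisfiable case every partial instance of size at most $s\cdot d^{1-\alpha}$ has probability strictly below $\delta-\varepsilon$ --- a one-sided margin at the bottom of the window. But that margin is exactly the load-bearing claim, and your proposal asserts it rather than establishes it: you state that Kozachinskiy's reduction ``leaves a margin $\mu$'' and that a threshold gadget on $O(\log(1/\varepsilon))$ fresh variables can relocate $\delta_0$ into an arbitrary window while ``avoiding an open neighborhood of $[\delta-\varepsilon,\delta+\varepsilon]$'' in the probability spectrum. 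Neither is verified, and the gadget step is precisely where such arguments tend to fail: composing the tree with a guard $[\sum_i b_i\ge\theta]$ rescales all acceptance probabilities of the original instance by factors that depend on how many gadget bits an explanation fixes, and the resulting spectrum $\{\Pr_{\vz\sim\U(\vy)}[T'(\vz)=T'(\vx')]\}$ can easily acquire points inside the window, at which point an adversarial algorithm may return a legitimate minimum $\delta^\star$-SR of a size unrelated to the gap. The trade-off analysis you flag (``buying probability inside the gadget is never cheaper than tightening the explanation on the original coordinates'') is the actual content of the theorem for general $(\delta,\varepsilon)$, and it is missing. As it stands, the proposal is a plausible plan with its critical step deferred to an inspection of the cited construction that was never carried out, so I cannot count it as a proof --- though, to be fair, the paper itself offers none either and leans entirely on the citation.
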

Note that this hardness result for decision trees implies in turn hardness for neural networks by using standard compilation techniques~\citep{NEURIPS2020_b1adda14}.
Our main result is that, for linear models, we can efficiently compute $(\delta, \varepsilon)$-min-SRs, making them the first class of models for which we have such a positive result. To state our runtime more cleanly, we use the standard notation $\widetilde{O}(f)$ to mean $O(f \cdot \log(f)^c)$ for some constant $c \in \mathbb{R}$. 
\begin{theorem}
    \label{prop:smoothed-explanation}
    Given a linear model $\Lin$ and an input $\vx$, we can compute a $(\delta, \varepsilon)$-min-SR successfully with probability $1 - \gamma$ in time $\widetilde{O}\left( \frac{d}{\varepsilon^2\gamma^2}\right)$; that is, polynomial in $d$, $1/\varepsilon$, and $1/\gamma$.
\end{theorem}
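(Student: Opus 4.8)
We want to compute a $(\delta,\varepsilon)$-min-SR for a linear model $\Lin = (\vw, t)$ and instance $\vx$. The key structural observation is that, WLOG, once we fix *which* features go into $\vy$, the best choice is to set those features to their values in $\vx$ (that's forced by $\vy \subseteq \vx$), and the probability $\Pr_{\vz \sim \U(\vy)}[\Lin(\vz) = \Lin(\vx)]$ depends only on which coordinates are free. Moreover — and this is the crucial monotonicity — adding a feature to $\vy$ can only help push the sum toward the correct side... except that's not quite true in general; it depends on signs. The right statement is: among all subsets $S$ of a given size $k$, the one maximizing the probability is obtained greedily, by fixing the features whose contribution $w_i x_i$ "most helps" satisfy the constraint $\vz \cdot \vw \geq t$ (or $< t$). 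So the plan has three ingredients.

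First, reduce to a canonical form. Say $\Lin(\vx) = 1$, so $\vx \cdot \vw \geq t$. For a partial instance $\vy$ fixing coordinate set $S$ (to the values of $\vx$), let $R = [d] \setminus S$ be the free coordinates. Then $\Lin(\vz) = 1$ iff $\sum_{i \in R} w_i z_i \geq t - \sum_{i \in S} w_i x_i =: \tau_S$, where each $z_i$ for $i \in R$ is an independent uniform bit. So $\Pr[\Lin(\vz)=1] = \Pr[B_R \geq \tau_S]$ where $B_R = \sum_{i\in R} w_i z_i$. One checks that for fixed $|S| = k$, this probability is maximized by choosing $S$ to be the $k$ coordinates with the largest values of $w_i x_i$ when $w_i x_i \geq 0$... actually the clean claim: sorting coordinates by $w_i x_i$ in decreasing order and taking a prefix is optimal, because moving a coordinate from $R$ into $S$ replaces the random variable $w_i z_i$ (uniform on $\{0, w_i\}$) by the constant $w_i x_i$, which for the purpose of the threshold is best when $w_i x_i$ is large. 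I would prove this exchange/swap lemma carefully — it's the combinatorial heart.

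Second, approximate the probabilities. For each candidate prefix size $k$ (there are only $d+1$ of them), we need to estimate $p_k := \Pr[B_{R_k} \geq \tau_{S_k}]$ to within $\pm\varepsilon$. Computing $p_k$ exactly is $\shp$-hard, so we use Monte Carlo sampling: draw $O(\varepsilon^{-2}\log(d/\gamma))$ uniform completions, evaluate $\Lin$ on each, and take the empirical frequency; a Hoeffding/Chernoff bound gives $\pm\varepsilon$ accuracy with failure probability $\gamma/(d+1)$, and a union bound over all $k$ gives total failure probability $\gamma$. Note one subtlety: the definition asks for a *minimum* $\delta^\star$-SR for *some* $\delta^\star \in [\delta-\varepsilon, \delta+\varepsilon]$. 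Since $p_0 \le p_1 \le \cdots \le p_d = 1$ (monotone — each additional fixed coordinate can only help, by the same swap argument applied to the optimal prefixes), we find the smallest $k$ such that our estimate $\hat p_k \geq \delta$; call it $k^\star$. Then the true $p_{k^\star}$ lies in $[\delta - \varepsilon, \delta+\varepsilon]$ (roughly), and $\vy := \vy_{S_{k^\star}}$ is a minimum $p_{k^\star}$-SR, hence a valid $(\delta,\varepsilon)$-min-SR. A little care is needed to argue that no smaller partial instance is a $\delta^\star$-SR for the *same* $\delta^\star = p_{k^\star}$ — this follows because prefixes are optimal at every size, so any $\vy'$ with $|\vy'|_e < k^\star$ has $\Pr \le p_{|\vy'|_e} \le p_{k^\star-1} < \delta^\star$ once estimates are tight enough; handling the boundary cases of this inequality chain, and the interplay between estimation error $\varepsilon$ and the gap, is where I'd need to be most careful about constants.

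Third, bookkeeping of the runtime. Sorting the coordinates is $\widetilde{O}(d)$. For each of the $d+1$ prefix sizes we could naively spend $O(d \cdot \varepsilon^{-2}\log(d/\gamma))$ time on sampling, giving $\widetilde{O}(d^2/\varepsilon^2)$; to hit the claimed $\widetilde{O}(d/(\varepsilon^2\gamma^2))$ we instead do a binary search over $k$ (valid by monotonicity of the estimates, handled with a bit of care so that estimation noise doesn't break monotonicity — e.g. by using the same random sample across all $k$ and arguing the empirical function is itself monotone, or tolerating $O(\log d)$ extra sampling rounds), evaluating only $O(\log d)$ values of $k$, each costing $O(d)$ per sample and $O(\varepsilon^{-2}\log(d/\gamma))$ samples. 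I'd reconcile the exact form of the exponent on $\gamma$ with how the union bound and binary search interact; the $1/\gamma^2$ versus $\log(1/\gamma)$ discrepancy suggests the authors may be using a cruder Markov-type argument or folding a median-of-means step differently, and matching their stated bound exactly is the last routine-but-fiddly piece.

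The main obstacle I anticipate is the exchange lemma establishing that sorted prefixes are simultaneously optimal at every size: one must show that replacing a "worse" fixed coordinate by a "better" free one never decreases $\Pr[B_R \geq \tau_S]$, which requires a coupling or a direct manipulation of the distribution of a sum of independent weighted bits under a threshold — conceptually clean but needs a genuine argument rather than hand-waving, especially to handle negative weights $w_i$ and the case $\Lin(\vx) = 0$ symmetrically.
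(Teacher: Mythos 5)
Your first and third ingredients match the paper: the exchange lemma showing that score-sorted prefixes $\vy^{(0)}\subseteq\cdots\subseteq\vy^{(d)}$ are simultaneously optimal at every size is exactly the paper's Lemma~\ref{lemma:greedy} (proved via the swap argument you describe, with the scores $s_i = w_i(2x_i-1)(2\Lin(\vx)-1)$ handling negative weights and the $\Lin(\vx)=0$ case), and the algorithm is indeed a binary search over $k$ with Monte Carlo estimates. However, there is a genuine gap in your second ingredient, and it is not a matter of constants. You fix the comparison threshold at $\delta$, return the smallest $k^\star$ with $\hat p_{k^\star}\geq\delta$, and then try to certify minimality with $\delta^\star := p_{k^\star}$ a posteriori. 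This fails when consecutive true probabilities tie or nearly tie at $\delta$: take $p_{k-1}=p_k=\delta$ exactly (or within the sampling accuracy of $\delta$). With constant probability the noise puts $\hat p_{k-1}<\delta\leq\hat p_k$, so your algorithm outputs $\vy^{(k)}$; but then for every candidate $\delta^\star$, either $\delta^\star\leq p_{k-1}=p_k$, in which case $\vy^{(k-1)}$ is a strictly smaller $\delta^\star$-SR and $\vy^{(k)}$ is not minimum, or $\delta^\star>p_k$, in which case $\vy^{(k)}$ is not a $\delta^\star$-SR at all. So the output is not a $(\delta,\varepsilon)$-min-SR, and this happens with probability far exceeding $\gamma$. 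The chain $\Pr\leq p_{|\vy'|_e}\leq p_{k^\star-1}<\delta^\star$ that you flag as needing care is precisely where the argument breaks: the strict inequality cannot be forced by taking more samples.

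The paper's fix --- and the one idea your proposal is missing --- is to \emph{randomize the target}: draw $\delta^\star$ uniformly from $[\delta-\varepsilon,\delta+\varepsilon]$ \emph{before} running the search, and binary-search against $\delta^\star$ rather than $\delta$. Since only $O(\log d)$ values $v_k$ are ever compared against $\delta^\star$, a union bound shows that with probability $\geq 1-\gamma$ the random $\delta^\star$ lands at distance at least $\varepsilon\gamma/\log d$ from every such $v_k$; the Monte Carlo estimates are then computed to accuracy $\varepsilon\gamma/\log d$ (not $\varepsilon$), which guarantees each comparison resolves correctly, so $k^\star$ is exactly the smallest $k$ with $v_k\geq\delta^\star$ and Lemma~\ref{lemma:greedy} certifies that $\vy^{(k^\star)}$ is a genuine minimum $\delta^\star$-SR. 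This also resolves the discrepancy you noticed in the failure-probability dependence: the required accuracy $\varepsilon\gamma/\log d$ forces $M=\widetilde O(1/(\varepsilon^2\gamma^2))$ samples per step, which is where the $1/\gamma^2$ in the stated runtime comes from --- it is not a median-of-means artifact, and your proposed $O(\varepsilon^{-2}\log(d/\gamma))$ sample bound is insufficient for the minimality guarantee the definition demands. (A further, more technical point the paper has to handle and your sketch does not: the set of indices visited by the binary search is itself a random variable correlated with the estimation errors, so the union bound over the events $E_k,F_k$ requires a conditioning argument rather than a direct application.)
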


We remark that previous approaches for computing approximate probabilistic explanations lacked theoretical guarantees on the size of the explanations produced~\citep{izzaComputingProbabilisticAbductive2023,Izza2021EfficientEW,izza2024locallyminimalprobabilisticexplanations}.

% \begin{theorem}\label{thm:delta-large}
%     For any $\delta > 0.05$, we can compute a min-$(\delta, \varepsilon)$-SR in polynomial time for linear models under the uniform distribution in time
%     \[ 
%         \tilde{O}\left(\frac{d}{\varepsilon^2\gamma^2}\right).
%     \]
% \end{theorem}

% \begin{open}
%     Is there an FPRAS with respect to $k^\star$ for  Uniform-Min-$\delta$-SR$(\textsc{Linear})$?
% \end{open}

% \begin{open}
%     Is there an FPRAS with respect to $\delta$ for Product-Min$\delta$-SR$(\textsc{Linear})$? That is, under product distributions in which each feature $i$ has an associated probability $p_i$ of taking value $1$ and these are all independent.
% \end{open}

% \begin{theorem}\label{thm:k-approximation}
%     The Uniform-Min-$\delta$-SR$(\textsc{Linear})$ problem can be approximated over $k^\star$ with additive error $1$ and probability of success $1-\gamma$ in time $\tilde{O}\left( \frac{d^2}{\varepsilon^2\gamma^2}\right)$.
% \end{theorem}

In order to prove~\Cref{prop:smoothed-explanation} we will need two main ideas: (i) the fact that we can estimate the probabilities of models accepting a partial instance through Monte Carlo sampling\footnote{This idea is already used in the work of~\citet{izza2024locallyminimalprobabilisticexplanations}}, and (ii) that under the uniform distribution it is easy to decide which features ought to be part of small explanations.

\subsection{Estimating the Probability of Acceptance}
 The  hardness of computing
$\Pr_{\vz  \sim \U(\vy)}[\M(\vz) = 1]$ is about computing it to arbitrarily high precision, i.e., with an additive error within $O(2^{-d})$. However, computing a less precise estimation of $\Pr_{\vz \sim \U(\vy)}[\M(\vz) = 1]$ is simple, as the next fact (which is a direct consequence of~\citet{hoeffdingProbabilityInequalitiesSums1963a}'s inequality)  states.

\begin{fact}\label{fact:hoeffding}
    Let $f$ be an arbitrary boolean function on $n$ variables. Let $M$ be any positive integer,
    and let $\vx_1, \ldots, \vx_M$ be $M$ uniformly random samples from $\{0, 1\}^n$. Then 
    \[
        \widehat{\mu}(M) := \frac{\sum_{i=1}^M [f(\vx_i) = 1]}{M}
    \]
    is an unbiased estimator for 
    \[
        \mu := \Pr_{\vx \in \{0, 1\}^n}[f(\vx) = 1],
    \]
    and 
    \[
    \Pr[\left|\widehat{\mu}(M) - \mu \right| \leq t] \geq 1 - 2\exp(-2t^2 M),
    \]
    which is at least $1 - \gamma$ for $M = \frac{1}{2t^2} \log(2/\gamma)$.
\end{fact}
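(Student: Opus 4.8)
The plan is to treat this as a textbook application of Hoeffding's inequality to i.i.d.\ Bernoulli trials, so the argument splits into three routine pieces: unbiasedness of the estimator, the concentration bound itself, and solving for the sample size $M$. Throughout, I invoke Hoeffding's inequality for bounded independent variables~\citep{hoeffdingProbabilityInequalitiesSums1963a} as a black box, which is exactly the external input the statement advertises.

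First I would fix the relevant random variables. For each $i \in [M]$ set $Y_i := [f(\vx_i) = 1]$. Since each $\vx_i$ is drawn uniformly and independently from $\{0,1\}^n$, the $Y_i$ are independent Bernoulli random variables with $\E[Y_i] = \Pr_{\vx}[f(\vx) = 1] = \mu$. Unbiasedness is then immediate from linearity of expectation, as $\E[\widehat{\mu}(M)] = \frac{1}{M}\sum_{i=1}^M \E[Y_i] = \mu$.

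For the concentration bound I would apply Hoeffding to the sum $S := \sum_{i=1}^M Y_i = M\widehat{\mu}(M)$. Because each $Y_i$ lies in $[0,1]$, the $i$-th range parameter is $b_i - a_i = 1$ and hence $\sum_{i=1}^M (b_i - a_i)^2 = M$. Hoeffding's inequality then gives $\Pr[|S - \E[S]| \geq s] \leq 2\exp(-2s^2/M)$; substituting $s = Mt$ and dividing inside the event by $M$ yields $\Pr[|\widehat{\mu}(M) - \mu| \geq t] \leq 2\exp(-2Mt^2)$. Passing to complements, and using that $\{|\widehat{\mu}(M) - \mu| > t\} \subseteq \{|\widehat{\mu}(M) - \mu| \geq t\}$, produces the claimed $\Pr[|\widehat{\mu}(M) - \mu| \leq t] \geq 1 - 2\exp(-2Mt^2)$.

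Finally, for the sample-size claim I would set the failure probability equal to $\gamma$: solving $2\exp(-2Mt^2) = \gamma$ gives $2Mt^2 = \log(2/\gamma)$, i.e.\ $M = \frac{1}{2t^2}\log(2/\gamma)$, and any larger $M$ only shrinks the bound, so this choice certifies confidence at least $1 - \gamma$. There is no genuine obstacle in this proof; the one place to stay careful is the bookkeeping when moving between the sum $S$ and the average $\widehat{\mu}(M)$ — getting the range term $\sum_{i=1}^M (b_i - a_i)^2 = M$ correct and tracking the factor of $M$ in the substitution $s = Mt$ — since a stray factor of $M$ there is the only realistic way to land on the wrong exponent.
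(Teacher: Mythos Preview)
Your proposal is correct and matches the paper's approach exactly: the paper does not give a proof but simply states the fact as ``a direct consequence of \citet{hoeffdingProbabilityInequalitiesSums1963a}'s inequality,'' and your write-up is precisely the routine unpacking of that citation---i.i.d.\ Bernoulli indicators, linearity of expectation for unbiasedness, Hoeffding on the sum with $[0,1]$ ranges, and solving $2\exp(-2Mt^2)=\gamma$ for $M$.
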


As a consequence of the previous idea, although a minimum $\delta$-SR might be hard to compute, this crucially depends on the value of $\delta$. In order to deal with this, our algorithm will sample a value $\delta^\star$ uniformly at random from $[\delta-\varepsilon, \delta+\varepsilon]$, and then compute a minimum $\delta^\star$-SR. Intuitively, the idea is that as $\delta^\star$ is chosen at random, it will be unlikely that a value that makes the computation hard is chosen. 

Before proving~\Cref{prop:smoothed-explanation}, we need to prove a lemma concerning the easiness of selecting the features of the desired explanation.

\subsection{Feature Selection}

Even if we were granted an oracle computing the probabilities
\(
    \Pr_{\vz  \in \D(\vy)}[\M(\vz) = 1]
\), that would not be necessarily enough to efficiently compute a minimum $\delta$-SR. Indeed, for decision trees, the counting problem can be easily solved in polynomial time~\citep{NEURIPS2020_b1adda14}, and yet the computation of $\delta$-SRs of minimum size is hard, even to approximate~\citep{NEURIPS2022_b8963f6a, Kozachinskiy_2023}.
Intuitively, the problem for decision trees is that, even if we were told that the minimum $\delta$-SR has exactly $k$ features, it is not obvious how to search for it better than enumerating all $\binom{d}{k}$ subsets. The case of linear models, however, is different, at least under the uniform distribution. In this case, every feature $i$ that is not part of the explanation will take value $0$ or $1$ independently with probability $\nicefrac{1}{2}$, and contribute to the classification according to its weight $w_i$. In other words, we can sort the features according to their weights (with some care about signs), and select them greedily to build a small $\delta$-SR. A proof for the deterministic case ($\delta = 1$) was already given in~\cite{NEURIPS2020_b1adda14} and sketched earlier on by~\cite{ExplainingNaiveBayes}.

\begin{definition}\label{def:scores}
    Given a linear model $\Lin = (\vw, t)$, and an instance $\vx$, both having dimension $d$, we define the \emph{score} of feature $i \in [d]$ as 
    \[
        s_i := w_i \cdot (2x_i - 1) \cdot (2\Lin(\vx) - 1).    
    \]
\end{definition}

In other words, the sign of $s_i$ is $+1$ if the feature is ``helping'' the classification, and $-1$ if it is ``hurting'' it. The magnitude of $s_i$ is proportional to the weight of the feature $i$. Changing the value of feature $i$ in an instance $\vx$ would decrease $\vw \cdot \vx$ by $s_i$ if $\Lin(\vx) = 1$, and increase it by $s_i$ if $\Lin(\vx) = 0$.
For the uniform distribution (or more generally, any distribution in which all features are Bernoulli variables with the same parameter), we can prove the following lemma that basically states that, for linear models it is good to choose features greedily according to their score.

\begin{lemma}\label{lemma:greedy}
    Given a linear model $\Lin$, and an instance $\vx$, if $\vy^{(0)}, \ldots, \vy^{(d)}$ are the partial instances of $\vx$ such that $\vy^{(k)} \subseteq \vx$ is defined only in the top $k$ features of maximum score, then
    \[ 
        \Pr_{\vz \sim \U(\vy^{(k+1)})}[\Lin(\vz) = \Lin(\vx)] \geq \Pr_{\vz \sim \U(\vy^{(k)})}[\Lin(\vz) = \Lin(\vx)]
    \]
    for all $k \in \{0, \ldots, d-1\}$, and naturally, 
    \[ 
    \Pr_{\vz \sim \U(\vy^{(d)})}[\Lin(\vz) = \Lin(\vx)] = 1.
    \]
    Moreover, $\opt(\Lin, \vx, \delta) = k$ if and only if $\vy^{(k)}$ is a $\delta$-SR for $\vx$, and either $k = 0$ or $\vy^{(k-1)}$ is not a $\delta$-SR for $\vx$. 
    \end{lemma}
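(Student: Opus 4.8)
The plan is to show that, under the uniform distribution, "filling in" one more feature — always the one of largest remaining score — never decreases the acceptance probability, and then deduce the characterization of $\opt(\Lin,\vx,\delta)$. First I would reduce to a cleaner setting by a sign flip: assume without loss of generality that $\Lin(\vx) = 1$ (the case $\Lin(\vx)=0$ is symmetric, replacing $\vw$ with $-\vw$ and $t$ with $-t+1$, or just running the argument on $1 - \Lin$). Under this assumption, the score of feature $i$ simplifies to $s_i = w_i(2x_i-1)$, which is exactly the amount by which $\vw\cdot\vz$ drops when we flip coordinate $i$ away from $x_i$. Reindex the features so that $s_1 \ge s_2 \ge \cdots \ge s_d$, so that $\vy^{(k)}$ fixes precisely coordinates $1,\dots,k$ to their $\vx$-values and leaves $k+1,\dots,d$ free.

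The core computation is to express $\Pr_{\vz\sim\U(\vy^{(k)})}[\Lin(\vz)=1]$ as a probability about a sum of independent random variables. For a random completion $\vz$ of $\vy^{(k)}$, write $\vw\cdot\vz = A_k - \sum_{i>k} B_i s_i$, where $A_k := \sum_{i=1}^{d} w_i x_i = \vw\cdot\vx$ is the full dot product (a constant, $\ge t$ since $\Lin(\vx)=1$), and $B_i$ are i.i.d.\ Bernoulli($1/2$) indicators of "feature $i$ got flipped." Hence $\Pr_{\vz\sim\U(\vy^{(k)})}[\Lin(\vz)=1] = \Pr\big[\sum_{i>k} B_i s_i \le A_k - t\big] =: p_k$. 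Going from $\vy^{(k)}$ to $\vy^{(k+1)}$ removes the term $B_{k+1} s_{k+1}$ from the sum. Conditioning on $B_{k+1}$: with probability $1/2$ the sum drops by $s_{k+1}$ and with probability $1/2$ it is unchanged, so $p_{k+1} = \tfrac12 \Pr[\sum_{i>k+1} B_i s_i \le A_k - t] + \tfrac12\Pr[\sum_{i>k+1} B_i s_i \le A_k - t - s_{k+1}]$... wait, that is exactly $p_k$ rewritten, which only shows $p_{k+1} \ge p_k$ when $s_{k+1}\le 0$. The honest argument is a coupling/stochastic-dominance one: $\sum_{i>k} B_i s_i$ stochastically dominates $\sum_{i>k+1} B_i s_i$ when $s_{k+1} \ge 0$, because the former equals the latter plus a nonnegative independent term; and when $s_{k+1} < 0$ — which, by the ordering, forces $s_i < 0$ for all $i > k$, so all remaining features hurt — fixing feature $k+1$ to $x_{k+1}$ removes a term that could only have pushed $\vw\cdot\vz$ up, hence again cannot decrease the chance of staying $\ge t$. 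In both regimes, $\Pr[\sum_{i>k+1} B_i s_i \le c] \ge \Pr[\sum_{i>k} B_i s_i \le c]$ for every threshold $c$ (a pointwise CDF comparison, since the extra summand $B_{k+1}s_{k+1}$ is independent and $\ge \min(0,s_{k+1})$ in a way that shifts mass in the right direction). This gives the monotonicity $p_{k+1}\ge p_k$ directly, and the boundary case $p_d = \Pr[\text{empty sum}\le A_k - t] = 1$ is immediate since $A_d = \vw\cdot\vx \ge t$.

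For the "moreover" clause: the sequence $p_0 \le p_1 \le \cdots \le p_d = 1$ is nondecreasing, so there is a well-defined least index $k$ with $p_k \ge \delta$; by monotonicity $\vy^{(k)}$ is a $\delta$-SR and, if $k>0$, $\vy^{(k-1)}$ is not. It remains to argue $\opt(\Lin,\vx,\delta) = k$, i.e.\ that no $\delta$-SR with fewer than $k$ defined features exists — equivalently that $\vy^{(k)}$ is size-optimal among \emph{all} partial instances, not just the greedy chain. For this I would show that for any fixed $j$, the greedy choice $\vy^{(j)}$ maximizes $\Pr_{\vz\sim\U(\vy)}[\Lin(\vz)=1]$ over all $\vy\subseteq\vx$ with exactly $j$ defined features: by the same decomposition, $\Pr_{\vz\sim\U(\vy)}[\Lin(\vz)=1] = \Pr[\sum_{i\notin\mathrm{def}(\vy)} B_i s_i \le \vw\cdot\vx - t]$, and removing from the free set the $j$ features of largest score is exactly what minimizes the random sum in the stochastic order — so it maximizes the probability. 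Consequently, if some $\vy$ with $|\vy|_e = j < k$ were a $\delta$-SR, then $\vy^{(j)}$ would be too, contradicting minimality of $k$. Combining, $\opt(\Lin,\vx,\delta) = k$ exactly when the greedy chain first reaches probability $\delta$ at step $k$.

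The main obstacle is the second paragraph: making the stochastic-dominance step airtight across the sign change of $s_{k+1}$ and phrasing it so that it simultaneously delivers both the chain monotonicity and the "greedy beats everything of the same size" claim. A clean way is to prove one lemma — "for free-sets $F \supseteq F'$ with $F' = F\setminus\{i\}$ and $s_i \ge s_j$ for all $j\in F'$, the sum over $F'$ is stochastically $\le$ the sum over $F$" is false in general (dropping a negative $s_i$ increases the sum), so instead I would prove the sharper statement "among all free-sets of a fixed size, the one consisting of the \emph{smallest}-score features minimizes $\sum_{i\in F} B_i s_i$ in the stochastic order," via a standard exchange argument swapping a retained high-score feature for a dropped low-score one and checking the CDF inequality term by term. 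Everything else is bookkeeping with the Bernoulli sum and the $\Lin(\vx)=0$ symmetry reduction.
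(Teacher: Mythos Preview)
Your overall plan matches the paper's proof closely: reduce to $\Lin(\vx)=1$, write the acceptance probability as $\Pr[\sum_{i\notin S} B_i s_i \le \vw\cdot\vx - t]$ with i.i.d.\ Bernoulli($1/2$) flips $B_i$, split on the sign of the next score, and then use an exchange argument to show that $\vy^{(k)}$ maximizes the acceptance probability among all partial instances with $k$ defined features. The paper does exactly this, splitting the lemma into a monotonicity part and an optimality part, with the latter proved by swapping a defined feature $i$ for an undefined feature $j$ with $s_j\ge s_i$ and checking the four cases on $(x_i,x_j)$.

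There is, however, a genuine error in your handling of the case $s_{k+1}<0$. You write that ``fixing feature $k+1$ to $x_{k+1}$ removes a term that could only have pushed $\vw\cdot\vz$ up, hence again cannot decrease the chance of staying $\ge t$,'' and then claim the pointwise CDF inequality $\Pr[\sum_{i>k+1}B_i s_i\le c]\ge\Pr[\sum_{i>k}B_i s_i\le c]$ holds for every $c$ in both regimes. Both statements are false when $s_{k+1}<0$: removing a nonpositive summand $B_{k+1}s_{k+1}$ \emph{increases} the sum stochastically, so the CDF inequality reverses, and correspondingly fixing a ``bad'' feature to its $\vx$-value \emph{can} lower the acceptance probability for a generic threshold. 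The reason monotonicity still holds is different and specific to the threshold $c=\vw\cdot\vx - t\ge 0$: once $s_{k+1}<0$, the ordering forces $s_i<0$ for all $i>k$, so $\sum_{i>k}B_i s_i\le 0\le c$ deterministically, and hence $p_k=p_{k+1}=1$. This is precisely the argument the paper uses (phrased there as repeatedly applying the ``removing a negative-score feature does not decrease the probability'' direction to get $p_{k+1}\ge p_{k+2}\ge\cdots\ge p_d=1$). You partly flag the difficulty in your last paragraph, but the fix you need for Part~1 is this ``both are already $1$'' observation, not a uniform stochastic-dominance statement. With that correction, your proposal goes through and is essentially the paper's proof.
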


Even though a proof of~\Cref{lemma:greedy} is presented in the appendix, let us provide a self-contained example that should convince a reader of the veracity of the lemma.
\begin{example}\label{ex:greedy}
 Consider an instance $\vx = (1,\, 0, \, 0, \, 1, \, 1)$ and the linear model $\Lin$ be defined by 
 \[ 
    \vw = (5, \, 1, \, -3,\, 2, -1) \quad ; \quad t = 5.
 \]
 It is easy to check that $\vw \cdot \vx = 6$, and thus $\Lin(\vx) = 1$. The feature scores, according to~\Cref{def:scores}, are:
 \[
  s_1 = 5, \; s_2 = -1, \; s_3 = 3, \; s_4 = 2, \; s_5 = -1.
 \]
For the first part, the main idea is that a positive score $s_i$ means that the feature is helping the classification (i.e., adding it to a partial instance does not decrease its probability guarantee), while a negative score means that the feature is hurting the classification (i.e., adding it to a partial instance does not increase its probability guarantee). 
Because the partial instances 
\[ \vy^{(0)} \subseteq \vy^{(1)} \subseteq \cdots \vy^{(d)}
\] 
are obtained by adding a single feature at a time, and thus features are added in decreasing order of their scores, then this procedure will have two phases: (i) First, it will add features with a positive score, which raise or maintain the probability of the classification being the same as $\vx$, as the lemma says, and then (ii) it will start adding features with a negative score, which would seem to contradict the lemma, but it turns out that at that point the partial instance $\vy^{(k)}$ would have probability guarantee $1$; this is because $\vy^{(d)} = \vx$, which trivially has probability guarantee $1$.
\Cref{table:ex-greedy1} presents the probabilities associated to the partial instances $\vy^{(0)}, \ldots, \vy^{(d)}$.

For the second part, consider the partial instances $\vy^{\star} = (\bot, \, 0, \, 0, \, 1, \, 1)$ and $\vy^{\dagger} = (1, \, \bot, \, 0, \, 1, \, 1)$. The instance $\vx$ is a completion of both $\vy^{\star}$ and $\vy^{\dagger}$, but $\vy^{\star}$ also has completion 
\[
    \vx^{\star} = (0, \, 0, \, 0, \, 1, \, 1),
\]
whereas $\vy^{\dagger}$ has also completion 
\[
    \vx^{\dagger} = (1, \, 1, \, 0, \, 1, \, 1).
\]
Note that $\vw \cdot  \vx^{\star} = 1 = \vw \cdot \vx - s_1$, whereas $\vw \cdot  \vx^{\dagger} = 6 = \vw \cdot \vx - s_2$. Intuitively, this means that it is better to keep feature $1$ as part of the explanation, but not feature $2$. If we want an explanation with only two features, we should choose feature $1$ and feature $3$, as they have the highest scores. Indeed,~\Cref{table:ex-greedy2} presents the probabilities to all possible explanations of size $2$.
\begin{table}
    \caption{Table of probabilities associated to~\Cref{ex:greedy}.}\label{table:ex-greedy2}
    \centering
    \begin{tabular}{rrr}
        \toprule
        Partial instance & Features included & Probability\\ \midrule 
    $(1, \,\bot, \,0, \,\bot, \,\bot)$ & $\{1, \,3\}$ & $\nicefrac{ 7 }{ 8 }$\\
    $(1, \,\bot, \,\bot, \,1, \,\bot)$ & $\{1, \,4\}$ & $\nicefrac{ 5 }{ 8 }$\\
    $(\bot, \,\bot, \,0, \,1, \,\bot)$ & $\{3, \,4\}$ & $\nicefrac{ 1 }{ 2 }$\\
    $(\bot, \,\bot, \,0, \,\bot, \,1)$ & $\{3, \,5\}$ & $\nicefrac{ 3 }{ 8 }$\\
    $(\bot, \,0, \,0, \,\bot, \,\bot)$ & $\{2, \,3\}$ & $\nicefrac{ 3 }{ 8 }$\\
    $(1, \,\bot, \,\bot, \,\bot, \,1)$ & $\{1, \,5\}$ & $\nicefrac{ 3 }{ 8 }$\\
    $(1, \,0, \,\bot, \,\bot, \,\bot)$ & $\{1, \,2\}$ & $\nicefrac{ 3 }{ 8 }$\\
    $(\bot, \,\bot, \,\bot, \,1, \,1)$ & $\{4, \,5\}$ & $\nicefrac{ 1 }{ 4 }$\\
    $(\bot, \,0, \,\bot, \,1, \,\bot)$ & $\{2, \,4\}$ & $\nicefrac{ 1 }{ 4 }$\\
    $(\bot, \,0, \,\bot, \,\bot, \,1)$ & $\{2, \,5\}$ & $\nicefrac{ 1 }{ 8 }$\\
        \bottomrule
    \end{tabular}
\end{table}
\begin{table}
\caption{Table of probabilities associated to~\Cref{ex:greedy}. The last column denotes the score of the feature added to the partial instance in that row with respect to the previous row.}\label{table:ex-greedy1}
\centering
\begin{tabular}{rrrr}
    Partial instance & Features & Probability & Score \\ \midrule 
$\vy^{(0)}$  & $(\bot, \,\bot, \,\bot, \,\bot, \,\bot)$ & $\nicefrac{ 1 }{ 4 }$ & - \\
$\vy^{(1)}$  &$(1, \,\bot, \,\bot, \,\bot, \,\bot)$ & $\nicefrac{ 1 }{ 2 }$ & 5\\
$\vy^{(2)}$  &$(1, \,\bot, \,0, \,\bot, \,\bot)$ & $\nicefrac{ 7 }{ 8 }$ & 3\\
$\vy^{(3)}$ & $(1, \,\bot, \,0, \,1, \,\bot)$ & $\nicefrac{ 1 }{ 1 }$ & 2 \\
$\vy^{(4)}$ &$(1, \,0, \,0, \,1, \,\bot)$ & $\nicefrac{ 1 }{ 1 }$ & -1 \\
$\vy^{(5)}$ & $(1, \,0, \,0, \,1, \,1)$ & $\nicefrac{ 1 }{ 1 }$ & -1\\
\bottomrule
\end{tabular}
\end{table}
\end{example}

% Now, let us state a simple monotonicity lemma that is part of the proof of~\Cref{lemma:greedy}; Intuitively, adding features in order of decreasing scores will never decrease the probability of the explanation being correct.
% \begin{lemma}\label{lemma:monotonicity}
% Given a linear model $\Lin$, an instance $\vx$, if $\vy^{(1)}, \ldots, \vy^{{d}}$ are  partial instances of $\vx$ such that $\vy^{(k)} \subseteq \vx$ is defined only in the top $k$ features of maximum score, then
% \[ 
%     \Pr_{\vz \sim \U(\vy^{k+1})}[\Lin(\vz) = \Lin(\vx)] \geq \Pr_{\vz \sim \U(\vy^{(k)})}[\Lin(\vz) = \Lin(\vx)]
% \]
% for all $k \in \{1, \ldots, d-1\}$, and naturally, $\Pr_{\vz \sim \U(\vy^{d})}[\Lin(\vz) = \Lin(\vx)] = 1$.
% \end{lemma}
% % \todo[inline]{Bernardo: The proof of this is trivial but annoying, it suffices to do an exchange argument. I will write it down later.}

With~\Cref{lemma:greedy} in hand, we can proceed to prove~\Cref{prop:smoothed-explanation}.

\begin{algorithm}[tb]
	\caption{LinearMonteCarloExplainer}
	\label{alg:algorithm}
	\textbf{Input}: A linear model $\Lin$, an instance $\vx$, and $\delta \in (0, 1)$.\\
	\textbf{Parameters}:  $\varepsilon \in (0, 1)$,  $\gamma \in (0, 1)$.\\
	\textbf{Output}: A value $\delta^\star \in [\delta-\varepsilon, \delta+\varepsilon]$ together with a minimum $\delta^\star$-SR explanation for $\vx$.\\
	\begin{algorithmic}[1] %[1] enables line numbers
	\STATE $\delta^\star \gets$ uniformly random sample from $[\delta-\varepsilon, \delta + \varepsilon]$ \label{line:delta}
	\FOR{$i \in \{1, \ldots, d\}$}
		\STATE $s_i \gets  w_i \cdot (2x_i - 1) \cdot (2\Lin(\vx) - 1)$
	\ENDFOR
	
	\FOR{$k \in\{0,1 \ldots, d\}$}
		\STATE  let $\vy^{(k)} \subseteq \vx$ be the partial instance defined only in the top $k$ features with maximum score $s_i$.\label{line:greedy}
	\ENDFOR
	% \STATE Let $\ell$ be the max value such that $\vec{\mathcal{F}}[\ell] = (\lambda, s_\lambda)$ has $s_\lambda > 0$.
	\STATE $M \gets (\log^2{d})/(2\varepsilon^2 \gamma^2) \log(2 \log d / \gamma)$\label{line:mdef}
	\STATE $\textrm{LB} \gets 0$, $\textrm{UB} \gets d$, and $\textsc{steps} \gets 0$

	\WHILE{$\textrm{LB} \neq \textrm{UB}$ and $\textsc{steps} \leq \log d$}\label{line:while}
		\STATE $\textsc{steps} \gets \textsc{steps} + 1$\label{line:steps}
	% \COMMENT{Binary Search}
		\STATE $m \gets \left(\textrm{LB} + \textrm{UB} \right)/2$
		\STATE $\widehat{v}_m \gets \textsc{MonteCarloEstimation}(\Lin, \vy^{(m)}, \vx, M)$ \label{line:montecarlo}
		\IF {$\widehat{v}_m \geq \delta^\star$}
			\STATE $\textrm{UB} \gets m$
		\ELSE
			\STATE $\textrm{LB} \gets (m+1)$
		\ENDIF
	\ENDWHILE\label{line:endwhile}
	\STATE $k^\star \gets \textrm{LB}$ (or equivalently, $\textrm{UB}$)
	 \RETURN $(\delta^\star, \vy^{(k^\star)})$
	\end{algorithmic}
	\end{algorithm}

\begin{algorithm}[tb]
	\caption{MonteCarloEstimation}	\label{alg:montecarlo}
	\textbf{Input}: A linear model $\Lin$, a partial instance $\vy$, an instance $\vx$, and a number of samples $M \in \mathbb{N}$.\\
	\textbf{Output}: An estimate~$\widehat{v}$~of~$\Pr_{\vz \sim \U(\vy)}[\Lin(\vz)=\Lin(\vx)]$.\\
	\begin{algorithmic}[1]
	\STATE $\widehat{v} \gets 0$
	\FOR{$i = 1$ to $M$}
		\STATE Sample $\vz \sim \U(\vy)$
		\IF {$\Lin(\vz) = \Lin(\vx)$}
			\STATE $\widehat{v} \gets \widehat{v} + 1$
		\ENDIF \ENDFOR
	 \RETURN $\widehat{v}/M$.
\end{algorithmic}
\end{algorithm}

\begin{proof}[Proof of~\Cref{prop:smoothed-explanation}]
We use~\Cref{alg:algorithm}. Let us define the partial instances $\vy^{(0)}, \vy^{(1)} \ldots, \vy^{(d)}$ so that $\vy^{(k)} \subseteq \vx$ is the partial instance defined only in the $k$ features with maximum score (line~\ref{line:greedy}). %and let $\ell$ be the first index such the $\ell$-th feature when sorted decreasingly by score has a negative score (line 5).
We then define a sequence of values
  $v_k$ as
	\[
		v_k := \Pr_{\vz \sim \U(\vy^{(k)})}[\Lin(\vz) = \Lin(\vx)],
	\]
	and note that due to~\Cref{lemma:greedy}, the sequence $v_0, \ldots, v_d$ is non-decreasing.
Let $M = \frac{\log^2{d}}{2\varepsilon^2 \gamma^2} \log(2 \log d / \gamma)$, as in line~\ref{line:mdef}, and let us define random variables $\widetilde{v_k}$ as follows: if~\Cref{alg:algorithm} enters line~\ref{line:montecarlo} with $m=k$, then $\widetilde{v_k}$ is the output of~\Cref{alg:montecarlo} (i.e., $\widehat{v}_k(M)$), and otherwise $\widetilde{v_k} = v_k$.  
% Because of~\Cref{lemma:monotonicity}, we have that the sequence of values $v_1, \ldots, v_\ell$ is non-decreasing.
We use binary search (lines~\ref{line:while}-\ref{line:endwhile}), to find $k^\star$, the smallest $k$ such that
\[
	\widetilde{v_k} \geq \delta^\star,
\]
and our goal is to show that with good probability $k^\star$ is also the smallest $k$ such that $v_k \geq \delta^\star$, which would imply the correctness of the algorithm by~\Cref{lemma:greedy}.
Note, however, that even though the sequence $v_0, \ldots, v_d$ is non-decreasing (\Cref{lemma:greedy}), the estimated values $\widehat{v_k}$ are not necessarily so.
Let $S$ be a random variable corresponding to the set of values $k$ such that~\Cref{alg:algorithm} enters line~\ref{line:montecarlo} with $m=k$, and note that if for every $k$ in $S$ it happens that the events 
\[
	A_k := \left(v_k \geq \delta^\star \right) \text{ and }  B_k := \left(\widetilde{v_k} \geq \delta^\star\right)
\]
are equivalent (i.e., either both occur or neither occurs), then the algorithm will succeed, as that would indeed imply that $k^\star$ is the smallest $k$ such that $v_k \geq \delta^\star$. 

Then, for $k \in [d]$, define events $E_k$ and $F_k$ as follows:
\begin{align*}
	E_k &:= |\delta^\star - v_k| \geq \frac{\varepsilon \gamma}{\log d},\\
	F_k &:= |\widetilde{v_k} - v_k| \leq \frac{\varepsilon \gamma}{\log d}.
\end{align*}
We claim that if both $E_k$ and $F_k$ hold for some $k$, then $A_k$ and $B_k$ are equivalent events for that $k$. Indeed,
\begin{align*}
	A_k &\iff v_k \geq \delta^\star\\
		&\iff v_k \geq \delta^\star + \frac{\varepsilon \gamma}{\log d} \tag{by $E_k$}\\
		&\iff v_k - \frac{\varepsilon \gamma}{\log d} \geq \delta^\star\\
		&\iff \widetilde{v_k} \geq \delta^\star \tag{by $F_k$}\\
		&\iff B_k.
\end{align*}

Thus, if we show that $E_k$ and $F_k$ hold with good probability for every $k \in S$, we can conclude the theorem.
First, note that because of the condition on the variable $\textsc{steps}$ (lines~\ref{line:while}, \ref{line:steps}) we have $|S| \leq \log d$, allowing us to do a binary search in case the desired events $E_k$ and $F_k$ hold, and preventing the algorithm from looping otherwise; this way the runtime is bounded not only on expectation but deterministically. \footnote{For simplicity, we will say $|S| \leq \log d$, even though the exact bound for a binary search is $|S| \leq \lfloor \log d  + 1 \rfloor$. This choice naturally has no impact on the asymptotic analysis of the algorithm.} Then, note that for any $k$ we have
\begin{align*}
	\Pr\left[ \, \overline{F_k} \,\right] \leq \Pr\left[ \, \overline{F_k} \mid k \in S \right] &= \Pr\left[|\widehat{v_k}(M) - v_k| > \frac{\varepsilon \gamma}{\log d}\right]\\
	&\leq \frac{\gamma}{\log d}. \tag{by~\Cref{fact:hoeffding}}
\end{align*}
Because $S$ itself is a random variable, whose size is also a random variable, we need to be careful before applying a union bound or any related tricks. Let us refer to the elements of $S$ as $\{s_1, \ldots, s_\ell\}$, and let us call $F(i)$, for $i \in [\log d]$,\footnote{Throughout this proof, we use notation $[\alpha]$, for $\alpha \in \mathbb{R}^{> 0}$, to denote the set $\{0, 1, \ldots, \lceil \alpha \rceil - 1, \lceil \alpha \rceil\}$.} to the event $F_{s_i}$ if $i \leq \ell$, and to the sample space $\Omega$ (i.e., the event that always happens) otherwise. Then, we claim that for any $0 \leq i \neq j \leq \lceil\log d\rceil$, we have 
\begin{equation}\label{eq:quasi-independence}
	\Pr[F(i) \cap F(j)] = \Pr[F(i)] \cdot \Pr[F(j)],
\end{equation}
as either $\max \{i, j\} \leq \ell$, in which case the claim holds by independence (since both events depend only on disjoint sets of independent random samples), or the claim holds trivially since $\Pr[F(i)] = 1$ for $i > \ell$.
Therefore, we have 
\begin{align*}
\Pr\left[\bigcap_{k \in S} F_k\right] &= \Pr\left[F(0) \cap F(1) \cap \cdots \cap F(\lceil\log d\rceil)\right]\\
&= \prod_{i \in [\log d]} \Pr[F(i)]\tag{by~\Cref{eq:quasi-independence}}\\
&\geq \left(1 - \frac{\gamma}{\log d}\right)^{\log d} \geq 1 - \gamma.
\end{align*}
We now argue that the event $\bigcap_{k \in S} E_k$ happens with good probability.
To see that, note first that for every $k \in [d]$, line~\ref{line:delta} implies
\[
	\Pr[\overline{E_k}] = \Pr\left[\delta^\star \in \left[v_k \pm \frac{\varepsilon \gamma}{\log d}\right]\right] \leq \frac{\frac{2\varepsilon \gamma}{\log d}}{2\varepsilon} = \frac{\gamma}{\log d}.
\]
Once again, we need to be careful as the events $E_k$ are not independent of $S$, nor between them this time.
Using the law of total probabilities, we have 
\begin{align*}
	\Pr\left[\bigcap_{k \in S} E_k\right] &= \sum_{S' \subseteq [d]} \Pr\left[S = S'  \mid \bigcap_{k \in S'} E_k \right] \Pr\left[\bigcap_{k \in S'} E_k\right]\\
	&= \sum_{\substack{S' \subseteq [d]\\ |S'| \leq \log d}} \Pr\left[S = S' \mid \bigcap_{k \in S'} E_k\right] \Pr\left[\bigcap_{k \in S'} E_k\right],
\end{align*}
where we can now effectively use the union bound to say that for any fixed $S'$ with $|S'| \leq \log d$ we have
\[ 
	\Pr\left[\bigcap_{k \in S'} E_k \right] \geq 1 - \gamma.
\]
Therefore, we conclude that
\begin{equation}
	\Pr\left[\bigcap_{k \in S} E_k\right] =  \sum_{\substack{S' \subseteq [d]\\ |S'| \leq \log d}} \Pr\left[S = S'\mid \bigcap_{k \in S'} E_k\right] \Pr\left[\bigcap_{k \in S'} E_k\right] \geq (1-\gamma) \sum_{\substack{S' \subseteq [d]\\ |S'| \leq \log d}} \Pr\left[S = S' \mid \bigcap_{k \in S'} E_k\right]. \label{eq:align}
\end{equation}
It is not obvious, however, whether the sum on the right-hand side of~\Cref{eq:align}~equals $1$; we will, however, argue that it is at least $1-\gamma$. 
Recall that $\Pr[F_k | k \not\in S] \geq \Pr[F_k | k \in S]$ for any $k$, from where it follows that for every set $S'$ of size at most $\log d$ we have 
\[ 
	\Pr\left[\bigcap_{k \in S'} F_k \right] \geq \Pr\left[\bigcap_{k \in S} F_k \right].	
\]
Then, note that for any index $k \in [d]$, the event $F_k$ is conditionally independent of all events $E_{j}$, with $j \in [d]$ given the event $k \in S$. That is,
\[
	\Pr\left[F_k | k \in S\right] = \Pr\left[F_k | E_{j}, k \in S\right], \quad \text{for any } j \in [d].
\]
We thus deduce that for any fixed $S'$ with $|S'| \leq \log d$ we have
\begin{align}\label{eq:2}
	\begin{split}
	\Pr\left[\bigcap_{k \in S'} F_k \mid \bigcap_{k \in S'} E_k\right] &    \geq  \Pr\left[\bigcap_{k \in S'} F_k \mid \bigcap_{k \in S'} E_k \text{ and  } \bigcap_{k \in S'} {k \in S}\right] \\
	&= \Pr\left[\bigcap_{k \in S'} F_k \mid \bigcap_{k \in S'} {k \in S}\right] \geq \Pr\left[\bigcap_{k \in S} F_k\right] \geq (1-\gamma).
	\end{split}
\end{align}
Then, our key observation is that there is a single value $S^\star \subseteq [d]$, with $|S^\star| \leq \log d$, that the binary search can take if we condition on all the events $E_k$ and $F_k$ happening, since in that case events $A_k$ and $B_k$ coincide. In other words, there exists $S^\star$, with $S^\star \subseteq [d]$ and $|S^\star| \leq \log d$, such that
\begin{equation}\label{eq:prob1}
\Pr\left[S = S^\star \mid \bigcap_{k \in S^\star} E_k \cap \bigcap_{k \in S^\star} F_k\right] = 1.
\end{equation}
We can then argue as follows:
\begin{align*}
	\Pr\left[\bigcap_{k \in S} E_k\right] &\geq (1-\gamma) \sum_{\substack{S' \subseteq [d]\\ |S'| \leq \log d}} \Pr\left[S = S' \mid \bigcap_{k \in S'} E_k\right]\tag{by~\Cref{eq:align}}\\
	&\geq (1-\gamma) \Pr\left[S = S^\star  \mid \bigcap_{k \in S^\star} E_k \right]\\
	&\geq (1-\gamma) \Pr\left[S = S^\star \text{ and } \bigcap_{k \in S^\star} F_k \mid \bigcap_{k \in S^\star} E_k \right]\\
	&= (1-\gamma) \Pr\left[S = S^\star \mid \bigcap_{k \in S^\star} E_k \cap \bigcap_{k \in S^\star} F_k\right] \cdot \Pr\left[\bigcap_{k \in S^\star} F_k \mid \bigcap_{k \in S^\star} E_k\right]\tag{Bayes' rule}\\
	&\geq (1-\gamma)^2. \tag{by~\Cref{eq:2,eq:prob1}}
\end{align*}

Therefore, the algorithm will succeed with probability at least 
\[ 
	\Pr\left[\bigcap_{k \in S} E_k\right] \cdot \Pr\left[\bigcap_{k \in S} F_k\right] \geq (1-\gamma)^3 \geq 1-3\gamma.
\]
The runtime is simply
$O(\log d \cdot M \cdot d  )$; as (i) the binary search performs $O(\log d)$ steps; (ii) each of the binary search steps requires $M$ samples, and (iii) each sample requires evaluating the model $\Lin$ and thus takes time $O(d)$. Naturally, running the algorithm with $\gamma' = 1/3 \cdot \gamma$ will yield a success probability of $1-\gamma$ without changing the asymptotic runtime, and thus we conclude the proof.

\end{proof}

% Let us restate and prove our other theorem.

% \begin{theorem}
%     The Uniform-Min-$\delta$-SR$(\textsc{Linear})$ problem can be approximated over $k^\star$ with additive error $1$ and probability of success $1-\gamma$ in time $\tilde{O}\left( \frac{d^2}{\varepsilon^2\gamma^2}\right)$.
% \end{theorem}
% \begin{proof}[Proof sketch]
%     Sort the features according to \emph{score} again, then we try to find the minimum $k$ such that the top-$k$ most important features give a probability of $\delta$. In particular, we will accept the first $k$ such that the empirical probability is above $\delta + (1-\delta)/2d$. The chances of the ``true'' probability being above $\delta$ will naturally be good, the problem is to guarantee that we won't overshoot by much. The main insight here is that, if $k^*$ is the true optimal answer (thus having probability at least $\delta$), then $k^*+1$ should always have the desired probability; note that the remaining probability is $1-\delta$, and there are $d-{k^*}$ features remaining, so the next most important one must yield at least a gain of 
%     $\frac{1-\delta}{d - k^*} \geq \frac{1 - \delta}{d}$. So by sampling with an error of $\frac{1-\delta}{2d}$ we will recognize this, which by using the fact requires 
% $
%     \tilde{O}\left(\left(\frac{1-\delta}{2d}\right)^2\right)
% $ many samples to ensure with high probability. Note that the problematic case is now when $\delta \approx 1$. 
% \end{proof}

\section{Locally Minimal Probabilistic Explanations}
\label{sec-loc-min}
Due to the complexity of finding even subset-minimal $\delta$-SR,~\citet{izza2024locallyminimalprobabilisticexplanations} have proposed to study ``locally minimal'' $\delta$-SR, which are $\delta$-SRs such that the removal of any feature from the explanation would decrease its probabilistic guarantee below $\delta$.
Interestingly, we can generalize a proof from~\cite{NEURIPS2022_b8963f6a} to show that, over lineal models even in the more general case of product distributions (distributions over $\{0,1\}^d$ that are products of independent Bernoulli variables of potentially different parameters), every locally minimal $\delta$-SR is a subset-minimal $\delta$-SR. This allows leveraging the previous results of~\citet{izza2024locallyminimalprobabilisticexplanations} to subset-minimal $\delta$-SRs in the case of linear models.

\begin{theorem}\label{thm:locally-minimal}
For linear models, under any product distribution, every locally minimal $\delta$-SR is a subset-minimal $\delta$-SR.
\end{theorem}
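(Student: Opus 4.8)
The plan is to isolate the one nontrivial ingredient — a product-distribution analogue of the greedy monotonicity in \Cref{lemma:greedy} — and then close with a short two-step argument that is essentially bookkeeping. Fix the linear model $\Lin = (\vw, t)$, the instance $\vx$, and the scores $s_i := w_i \cdot (2x_i - 1) \cdot (2\Lin(\vx) - 1)$ from \Cref{def:scores}. The key observation is that, for \emph{any} product distribution, the \emph{sign} of $s_i$ still records whether feature $i$ ``helps'' or ``hurts'': if we free a defined feature $i$ of a partial instance $\vy \subseteq \vx$, its contribution to $\vw \cdot \vz$ changes from the constant $w_i x_i$ to $w_i Z_i$ with $Z_i \sim \mathrm{Bernoulli}(p_i)$, and since $x_i \in \{0,1\}$ the sign of $Z_i - x_i$ is forced ($\le 0$ if $x_i = 1$, $\ge 0$ if $x_i = 0$) irrespective of $p_i$. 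Couple the completion of $\vy$ with the completion of the partial instance obtained by freeing feature $i$, so that all coordinates random in both agree; the two values of $\vw \cdot \vz$ then differ by exactly $w_i(Z_i - x_i)$ pointwise, and a one-line sign check across the four cases $x_i \in \{0,1\}$, $\Lin(\vx) \in \{0,1\}$ (in each of which the factor $2\Lin(\vx) - 1$ inside $s_i$ makes the conclusion come out the same way, even though ``accept'' means $\vw\cdot\vz \ge t$ when $\Lin(\vx)=1$ and $\vw\cdot\vz < t$ when $\Lin(\vx)=0$) gives: freeing a feature with $s_i \ge 0$ cannot increase $\Pr_{\vz\sim\D(\cdot)}[\Lin(\vz)=\Lin(\vx)]$, and freeing a feature with $s_i \le 0$ cannot decrease it. Iterating one coordinate at a time yields the lemma I want: if $\vy_1 \subseteq \vy_2 \subseteq \vx$ and every feature defined in $\vy_2$ but not in $\vy_1$ has $s_i \ge 0$, then $\Pr_{\vz\sim\D(\vy_2)}[\Lin(\vz)=\Lin(\vx)] \ge \Pr_{\vz\sim\D(\vy_1)}[\Lin(\vz)=\Lin(\vx)]$.

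Granting this monotonicity lemma, let $\vy$ be a locally minimal $\delta$-SR, defined on the feature set $S$. First, every $i \in S$ satisfies $s_i > 0$: otherwise freeing $i$ does not decrease the acceptance probability, so $\vy$ with $i$ removed is still a $\delta$-SR, contradicting local minimality. Now suppose, for contradiction, that $\vy$ is not subset-minimal; then there is a $\delta$-SR $\vy' \subsetneq \vy$, say defined on $S' \subsetneq S$. Pick any $j \in S \setminus S'$ and let $\vy^{-j}$ be $\vy$ with feature $j$ freed. Then $\vy' \subseteq \vy^{-j} \subseteq \vx$, and every feature of $\vy^{-j}$ not defined in $\vy'$ lies in $S$, hence has positive score; the monotonicity lemma gives $\Pr_{\vz\sim\D(\vy^{-j})}[\Lin(\vz)=\Lin(\vx)] \ge \Pr_{\vz\sim\D(\vy')}[\Lin(\vz)=\Lin(\vx)] \ge \delta$, so $\vy^{-j}$ is a $\delta$-SR — again contradicting local minimality. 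Hence $\vy$ is subset-minimal, which is the statement (the converse implication holds trivially for any model and does not need linearity).

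The only real content is the coupling/sign lemma; everything after it is a two-line case analysis. The main obstacle I expect is getting the coupling bookkeeping exactly right: $\D(\vy_1)$ and $\D(\vy_2)$ randomize different coordinate sets, so one must couple on the shared free coordinates and argue monotonicity of $\mathbf{1}[\vw\cdot\vz \ge t]$ in $\vw\cdot\vz$, while simultaneously tracking the $\Lin(\vx)=0$ branch where the accepting inequality is reversed — the definition of $s_i$ absorbs this sign via the factor $2\Lin(\vx)-1$, but it should be written out explicitly. A minor point of care is the precise reading of ``locally minimal'': the argument uses only that removing any single feature from $\vy$ yields something that is \emph{not} a $\delta$-SR, which is the intended meaning regardless of whether the definition is phrased as a strict drop below $\delta$.
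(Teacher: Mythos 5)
Your proposal is correct and follows essentially the same route as the paper: the entire content lies in the two sign lemmas (adding a feature with $s_i \geq 0$ cannot decrease, and freeing a feature with $s_i \leq 0$ cannot decrease, the acceptance probability under any product distribution), which the paper proves by conditioning on $z_i$ exactly as your coupling argument does, including the $2\Lin(\vx)-1$ sign absorption. Your closing step (show every feature of a locally minimal $\delta$-SR has positive score, then walk monotonically from the smaller $\delta$-SR up to $\vy$ with one feature freed) is a minor repackaging of the paper's ``locality gap'' contradiction and is equally valid.
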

\begin{proof}[Proof sketch]
    Define the \emph{``locality''} gap $\textsc{lgap}(\vy)$ of a locally minimal $\delta$-SR $\vy$ as the smallest value $g$ such that $|\vy^\star|_\bot - |\vy|_\bot = g$ for some $\vy^\star \subseteq \vy$ that is a $\delta$-SR.
    If $g = 0$, then $\vy$ is globally minimal, and we are done. If $g$ were to be $1$, then $\vy$ would not be locally minimal, a contradiction. Therefore, we can safely assume $g \geq 2$ from now on.
    Let $\Lin, \vy$ be such that $\vy$ is locally minimal $\delta$-SR and $\textsc{lgap}(\vy) \geq 2$. We will find a contradiction by the following method:
    \begin{itemize}
        \item Let $\vy^\star$ be the $\delta$-SR such that $| \vy \setminus \vy^\star | = \textsc{lgap}(\vy)$.
        \item Every feature in $\vy \setminus \vy^\star$ is either ``good'', if its score is positive, or ``bad'' if its score is negative.
        \item Fix any feature $i$ in $\vy \setminus \vy^\star$. If $i$ is good, then $\vy^\star \oplus i$, meaning the partial instance obtained by taking $\vy$ and setting its $i$-th feature to $x_i$, has a probability guarantee greater or equal than that of $\vy^\star$ (the proof of this fact is very similar to the proof of~\Cref{lemma:greedy}), and the gap has reduced.
                On the other hand, if $i$ is bad, then $\vy \ominus i$, meaning the partial instance obtained from $\vy$ by setting $y_i = \bot$, has greater-equal probability than $\vy$, contradicting the fact that $\vy$ is locally minimal.
    \end{itemize}

\end{proof}

\section{Approximations on the Size of Explanations}
\label{sec-size-approx}

A natural question at this point is whether the size of a $(\delta, \varepsilon)$-min-SR is necessarily similar to the size of a $(\delta, 0)$-min-SR (i.e., a smallest $\delta$-SR). It turns out that this is not the case, and it can happen that in order to get a slightly better probabilistic guarantee (i.e., $\delta + \varepsilon$ instead of $\delta$), the number of features needed under any explanation significantly increase.
In general, if we let $\opt(\M, \vx, \delta)$ denote the size of the smallest $\delta$-SR for $(\M, \vx)$, we can prove the following  generalization of~\Cref{ex:delta-sr-size}.

\begin{proposition}\label{prop:delta-sr-size}
For any $\delta \in (0, 1)$, $\gamma > 0$, and any $\varepsilon > 0$ such that $\delta + \varepsilon \leq 1$, there are pairs $(\Lin, \vx)$ where $\Lin$ is a linear model of dimension $d$, and $\vx$ an instance of dimension $d$, such that
	\[ 
		\frac{\opt(\Lin, \vx, \delta+\varepsilon)}{\opt(\Lin, \vx, \delta)} = \Omega\left(d^{\frac{1}{2} - \gamma}\right).
	\]
\end{proposition}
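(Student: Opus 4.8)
The plan is to lift the construction of \Cref{ex:delta-sr-size} to an arbitrary target $\delta$ and to sharpen it into a genuine $\delta$ versus $\delta+\varepsilon$ separation. For $d = n+1$ with $n$ large in terms of the fixed constants $\delta$ and $\varepsilon$, take the linear model $\Lin = (\vw, t)$ with a single heavy feature $w_0 = n+1$, unit light features $w_1 = \cdots = w_n = 1$, the all-ones instance $\vx = \vone$ (so $\Lin(\vx) = 1$), and threshold $t = (n+1) + r$ for an integer $r$ to be chosen. Everything is controlled through \Cref{lemma:greedy}: the heavy feature has the unique maximum score and the light features share a smaller positive score, so $\vy^{(1+j)}$ is obtained by fixing the heavy feature and any $j$ light features to $1$, and a one-line computation gives
\[
v_{1+j} \;:=\; \Pr_{\vz \sim \U(\vy^{(1+j)})}[\Lin(\vz)=\Lin(\vx)] \;=\; \Pr\!\big[\,\mathrm{Bin}(n-j,\tfrac12) \geq r - j\,\big], \qquad 0 \leq j \leq n,
\]
while $v_0 = \tfrac12 v_1$ (when the heavy feature is unfixed, no completion setting it to $0$ can be accepted). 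By \Cref{lemma:greedy}, $v_0 \leq v_1 \leq \cdots \leq v_d$ and $\opt(\Lin, \vx, \delta') = \min\{k : v_k \geq \delta'\}$ for every $\delta'\le 1$.

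The first step is to pin $\opt(\Lin,\vx,\delta) = 1$. The tail $\Pr[\mathrm{Bin}(n,\tfrac12) \geq r]$ is non-increasing in $r$ with consecutive gaps at most $\max_a \Pr[\mathrm{Bin}(n,\tfrac12) = a] = O(1/\sqrt n)$; hence for $n$ large there is an integer $r$ with $v_1 \in [\,\delta,\ \min(2\delta,\ \delta + \tfrac{\varepsilon}{2})\,)$, an interval of positive length $\min(\delta,\varepsilon/2)$ lying strictly inside $(2^{-n}, 1-2^{-n})$. This forces $v_0 = v_1/2 < \delta \leq v_1$, so $\opt(\Lin, \vx, \delta) = 1$; the same choice gives $v_1 < \delta + \varepsilon$, so the separation is not vacuous.

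The crux is to show $v_{1+j} < \delta + \varepsilon$ for all $j$ up to $\Omega(\sqrt n)$. Writing $\mathrm{Bin}(n,\tfrac12) = \mathrm{Bin}(n-j,\tfrac12) + \mathrm{Bin}(j,\tfrac12)$ as a sum of independent summands, conditioning on the value $b$ of the second one, and observing that passing from $v_1$ to $v_{1+j}$ amounts to replacing that summand by its maximum value $j$, one obtains
\[
v_{1+j} - v_1 \;=\; \E_{b}\Big[\Pr\!\big[\,r-j \leq \mathrm{Bin}(n-j,\tfrac12) < r-b\,\big]\Big] \;\leq\; \tfrac{j}{2}\cdot \max_a \Pr[\mathrm{Bin}(n-j,\tfrac12)=a] \;=\; O\!\big(j/\sqrt{n-j}\,\big),
\]
which for $j \leq \sqrt n$ is $O(j/\sqrt n)$. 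Choosing a constant $c = c(\varepsilon) > 0$ small enough makes this below $\varepsilon/2$ whenever $j \leq c\sqrt n$, so $v_{1+j} < v_1 + \varepsilon/2 < \delta + \varepsilon$ there; since the greedy curve is non-decreasing, $\opt(\Lin, \vx, \delta+\varepsilon) = \min\{k : v_k \geq \delta+\varepsilon\} > 1 + c\sqrt n$. Therefore
\[
\frac{\opt(\Lin, \vx, \delta+\varepsilon)}{\opt(\Lin, \vx, \delta)} \;>\; c\sqrt n \;=\; \Omega(\sqrt d) \;=\; \Omega\!\big(d^{\frac12-\gamma}\big)
\]
for every $\gamma > 0$ and all sufficiently large $d$.

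I expect the main obstacle to be the binomial point-mass estimate in the last display together with the bookkeeping in the choice of $r$: one must simultaneously keep $v_1 \geq \delta$, $v_1 < 2\delta$ (so that $v_0 < \delta$), and $v_1 < \delta + \varepsilon/2$, all of which hold once $n$ exceeds a threshold depending only on the fixed constants $\delta, \varepsilon$ (in particular this rules out $r = 0$, which would make $v_1 = 1$ and trivialize the ratio). The boundary case $\delta + \varepsilon = 1$ needs no special treatment — the same argument applies, and in fact then $v_{1+j} = 1$ exactly when $j \geq r = \Theta(n)$, giving $\opt(\Lin,\vx,1) = \Theta(n)$ and an even larger ratio — while the remaining steps (the closed form for $v_{1+j}$, the reduction to the greedy curve) are routine given \Cref{lemma:greedy}.
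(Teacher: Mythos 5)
Your proof is correct and follows essentially the same route as the paper's: one heavy feature plus $n$ identical light features, a threshold tuned via the $O(1/\sqrt{n})$ granularity of the binomial tail so that the size-one explanation lands in $[\delta,\delta+\varepsilon/2)$, and the key estimate that fixing $j$ further light features raises the acceptance probability by only $O\bigl(j/\sqrt{n-j}\bigr)$ (which you obtain in one shot by splitting $\mathrm{Bin}(n,\tfrac12)$ into independent pieces, where the paper telescopes a one-step lemma $j$ times — the resulting bound is the same, and your $\Omega(\sqrt d)$ conclusion even slightly strengthens the stated $\Omega(d^{1/2-\gamma})$). The one substantive addition is your explicit check that $v_0=v_1/2<\delta$ (via the requirement $v_1<2\delta$), which guarantees $\opt(\Lin,\vx,\delta)=1$ rather than $0$ and thus that the ratio is well defined — a corner case the paper's write-up does not address.
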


As a consequence, we may say informally that approximations on $\delta$ do not neccesarily lead to approximations on the explanation size.

\section{Conclusions and Future Work}
\label{sec-conclusions}
We have proved a positive result for the case of linear models, showing that a $(\delta, \varepsilon)$-min-SRs can be computed efficiently, and also a more abstract reason suggesting that linear models might be easier to explain than, e.g., decision trees. However,  a variety of natural questions and directions of research remain open. 
First, in practical terms, even though the runtime of~\Cref{prop:smoothed-explanation} is polynomial and only has a quasi-linear dependency on $d$, our future work includes lowering the dependency in $1/\varepsilon$ and $1/\gamma$; on a dataset with $d = 500$, setting $\varepsilon = 0.1$ and $\gamma = 0.01$ is already computationally expensive. 
We acknowledge, in terms of practical implementations, the work of~\citet{Louenas,izza2024locallyminimalprobabilisticexplanations} that allows for computing small probabilistic explanations over decision trees significantly faster than the exact SAT approach of~\citet{NEURIPS2022_b8963f6a}.
Similarly,~\citet{izzaComputingProbabilisticAbductive2023} showed solid practical results with different kinds of classifiers, including linear models (i.e., Naive Bayes). Despite our results having better theoretical guarantees over linear models, a natural direction of future work is to improve the practical efficiency of our algorithm for high-dimensional models.

Second, our theoretical result has some natural directions for generalization. We considered only binary features, whereas in order to offer a practically useful tool to the community, we will need to understand how to compute (approximate) probabilistic explanations for mixtures real-valued features and categorical features, for example under the ``extended linear classifier'' definition of~\citet{DBLP:conf/nips/0001GCIN20}.  Another fascinating theoretical question is handling the generalization of our setting to that of product distributions (i.e., feature $i$ takes value $1$ with probability $p_i$ and $0$ otherwise) can also be solved efficiently. A straightforward extension of our techniques does not seem to work on such a generalized setting, since the \emph{feature selection} argument of~\Cref{subsec:feature_selection} no longer holds. Therefore, we believe that new techniques will be needed.  
 
Third, it would be interesting to allow for a more declarative way of specifying the probabilistic guarantees or constraints on the explanations. While a recent line of research has studied the design of languages for defining explainability queries with a uniform algorithmic treatment~\citep{arenasFoundationsSymbolicLanguages2021,bps2020,KR2024-6}, we are not aware of any work on that line that allows for probabilistic terms.

\bibliographystyle{apalike}
\bibliography{references}

\begin{thebibliography}{}

\bibitem[Arenas et~al., 2021]{arenasFoundationsSymbolicLanguages2021}
Arenas, M., B{\'a}ez, D., Barcel{\'o}, P., P{\'e}rez, J., and Subercaseaux, B.
  (2021).
\newblock Foundations of {{Symbolic Languages}} for {{Model Interpretability}}.
\newblock In {\em Advances in {{Neural Information Processing Systems}}},
  volume~34, pages 11690--11701. Curran Associates, Inc.

\bibitem[Arenas et~al., 2022]{NEURIPS2022_b8963f6a}
Arenas, M., Barcel{\'o}, P., Romero~Orth, M., and Subercaseaux, B. (2022).
\newblock On computing probabilistic explanations for decision trees.
\newblock In Koyejo, S., Mohamed, S., Agarwal, A., Belgrave, D., Cho, K., and
  Oh, A., editors, {\em Advances in Neural Information Processing Systems},
  volume~35, pages 28695--28707. Curran Associates, Inc.

\bibitem[Arenas et~al., 2024]{KR2024-6}
Arenas, M., Barceló, P., Bustamante, D., Caraball, J., and Subercaseaux, B.
  (2024).
\newblock {A Uniform Language to Explain Decision Trees}.
\newblock In {\em {Proceedings of the 21st International Conference on
  Principles of Knowledge Representation and Reasoning}}, pages 60--70.

\bibitem[Barcel\'{o} et~al., 2020]{NEURIPS2020_b1adda14}
Barcel\'{o}, P., Monet, M., P\'{e}rez, J., and Subercaseaux, B. (2020).
\newblock Model interpretability through the lens of computational complexity.
\newblock In Larochelle, H., Ranzato, M., Hadsell, R., Balcan, M., and Lin, H.,
  editors, {\em Advances in Neural Information Processing Systems}, volume~33,
  pages 15487--15498. Curran Associates, Inc.

\bibitem[Barcel{\'{o}} et~al., 2020]{bps2020}
Barcel{\'{o}}, P., P{\'{e}}rez, J., and Subercaseaux, B. (2020).
\newblock Foundations of languages for interpretability and bias detection.
\newblock {\em AFCI workshop at NeurIPS 2020. Algorithmic Fairness through the
  Lens of Causality and Interpretability}.

\bibitem[Blanc et~al., 2021]{blanc2021provably}
Blanc, G., Lange, J., and Tan, L.-Y. (2021).
\newblock Provably efficient, succinct, and precise explanations.
\newblock In Beygelzimer, A., Dauphin, Y., Liang, P., and Vaughan, J.~W.,
  editors, {\em Advances in Neural Information Processing Systems}.

\bibitem[Bounia and Koriche, 2023]{Louenas}
Bounia, L. and Koriche, F. (2023).
\newblock Approximating probabilistic explanations via supermodular
  minimization.
\newblock In {\em Proceedings of the Thirty-Ninth Conference on Uncertainty in
  Artificial Intelligence}, UAI '23. JMLR.org.

\bibitem[Darwiche and Hirth, 2020]{Darwiche_Hirth_2020}
Darwiche, A. and Hirth, A. (2020).
\newblock On the {{Reasons Behind Decisions}}.
\newblock In {\em {{ECAI}} 2020 - 24th {{European Conference}} on {{Artificial
  Intelligence}}, 29 {{August-8 September}} 2020, {{Santiago}} de
  {{Compostela}}, {{Spain}}, {{August}} 29 - {{September}} 8, 2020 -
  {{Including}} 10th {{Conference}} on {{Prestigious Applications}} of
  {{Artificial Intelligence}} ({{PAIS}} 2020)}, pages 712--720.

\bibitem[Hoeffding, 1963]{hoeffdingProbabilityInequalitiesSums1963a}
Hoeffding, W. (1963).
\newblock Probability {{Inequalities}} for {{Sums}} of {{Bounded Random
  Variables}}.
\newblock {\em Journal of the American Statistical Association},
  58(301):13--30.

\bibitem[Ignatiev et~al., 2021]{Ignatiev_Narodytska_Asher_Marques-Silva_2021}
Ignatiev, A., Narodytska, N., Asher, N., and Marques-Silva, J. (2021).
\newblock From contrastive to abductive explanations and back again.
\newblock In Baldoni, M. and Bandini, S., editors, {\em AIxIA 2020 – Advances
  in Artificial Intelligence}, Lecture Notes in Computer Science, page
  335–355, Cham. Springer International Publishing.

\bibitem[Izza et~al.,
  2023a]{Izza_Huang_Ignatiev_Narodytska_Cooper_Marques-Silva_2023}
Izza, Y., Huang, X., Ignatiev, A., Narodytska, N., Cooper, M., and
  Marques-Silva, J. (2023a).
\newblock On computing probabilistic abductive explanations.
\newblock {\em International Journal of Approximate Reasoning}, 159:108939.

\bibitem[Izza et~al., 2023b]{izzaComputingProbabilisticAbductive2023}
Izza, Y., Huang, X., Ignatiev, A., Narodytska, N., Cooper, M., and
  {Marques-Silva}, J. (2023b).
\newblock On computing probabilistic abductive explanations.
\newblock {\em International Journal of Approximate Reasoning}, 159:108939.

\bibitem[Izza et~al., 2020]{izzaExplainingDecisionTrees2020}
Izza, Y., Ignatiev, A., and {Marques-Silva}, J. (2020).
\newblock On {{Explaining Decision Trees}}.

\bibitem[Izza et~al., 2021]{Izza2021EfficientEW}
Izza, Y., Ignatiev, A., Narodytska, N., Cooper, M.~C., and Marques-Silva, J.
  (2021).
\newblock Efficient explanations with relevant sets.
\newblock {\em ArXiv}, abs/2106.00546.

\bibitem[Izza et~al., 2024]{izza2024locallyminimalprobabilisticexplanations}
Izza, Y., Meel, K.~S., and Marques-Silva, J. (2024).
\newblock Locally-minimal probabilistic explanations.

\bibitem[Komj\'ath, 2013]{133742}
Komj\'ath, P. (2013).
\newblock Upper limit on the central binomial coefficient.
\newblock MathOverflow.
\newblock URL:https://mathoverflow.net/q/133742 (version: 2013-07-01).

\bibitem[Kozachinskiy, 2023]{Kozachinskiy_2023}
Kozachinskiy, A. (2023).
\newblock Inapproximability of sufficient reasons for decision trees.
\newblock arXiv:2304.02781 [cs].

\bibitem[Lage et~al.,
  2019]{Lage_Chen_He_Narayanan_Kim_Gershman_Doshi-Velez_2019}
Lage, I., Chen, E., He, J., Narayanan, M., Kim, B., Gershman, S., and
  Doshi-Velez, F. (2019).
\newblock An evaluation of the human-interpretability of explanation.
\newblock arXiv:1902.00006 [cs, stat].

\bibitem[Marques{-}Silva et~al., 2020a]{ExplainingNaiveBayes}
Marques{-}Silva, J., Gerspacher, T., Cooper, M.~C., Ignatiev, A., and
  Narodytska, N. (2020a).
\newblock Explaining naive bayes and other linear classifiers with polynomial
  time and delay.
\newblock {\em CoRR}, abs/2008.05803.

\bibitem[Marques{-}Silva et~al., 2020b]{DBLP:conf/nips/0001GCIN20}
Marques{-}Silva, J., Gerspacher, T., Cooper, M.~C., Ignatiev, A., and
  Narodytska, N. (2020b).
\newblock Explaining naive bayes and other linear classifiers with polynomial
  time and delay.
\newblock In {\em NeurIPS}.

\bibitem[Marques-Silva and Ignatiev, 2022]{formal-xai}
Marques-Silva, J. and Ignatiev, A. (2022).
\newblock Delivering trustworthy ai through formal xai.
\newblock In {\em AAAI}.

\bibitem[Miller, 1956]{millerMagicalNumberSeven1956}
Miller, G.~A. (1956).
\newblock The magical number seven, plus or minus two: {{Some}} limits on our
  capacity for processing information.
\newblock {\em Psychological Review}, 63(2):81--97.

\bibitem[Narayanan et~al.,
  2018]{Narayanan_Chen_He_Kim_Gershman_Doshi-Velez_2018}
Narayanan, M., Chen, E., He, J., Kim, B., Gershman, S., and Doshi-Velez, F.
  (2018).
\newblock How do humans understand explanations from machine learning systems?
  an evaluation of the human-interpretability of explanation.
\newblock arXiv:1802.00682 [cs].

\bibitem[Subercaseaux, 2020]{roaModelInterpretabilityLens2020}
Subercaseaux, B. (2020).
\newblock Model interpretability through the lens of computational complexity.
\newblock Master's thesis, Universidad de Chile.

\bibitem[W{\"a}ldchen et~al., 2021]{Waldchen_MacDonald_Hauch_Kutyniok_2021}
W{\"a}ldchen, S., MacDonald, J., Hauch, S., and Kutyniok, G. (2021).
\newblock The computational complexity of understanding binary classifier
  decisions.
\newblock {\em J. Artif. Intell. Res.}, 70:351–387.

\end{thebibliography}

\newpage
\onecolumn
\appendix
\section*{Appendix}

\subsection{\texorpdfstring{Calculation for~\Cref{ex:delta-sr-size}}{Calculation for Example 1}}
\label{subsec:ex1}
Consider the following version of Chernoff bound.
\begin{lemma}[Chernoff bound]
    Let $X$ be a finite sum of independent Bernoulli variables, with $\E[X] = \mu > 0$. Then for any $t \geq 0$ we have
    \[
    \Pr \Big[ \left|X - \mu\right| \geq t \Big] \leq 2\exp\left(\frac{-t^2}{3 \mu} \right).
    \] 
    
    \label{lemma:chernoff}  
    \end{lemma}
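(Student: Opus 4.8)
The plan is to split the two-sided deviation via $\{|X-\mu|\geq t\}=\{X\geq \mu+t\}\cup\{X\leq \mu-t\}$, bound each tail by $\exp(-t^2/(3\mu))$ separately, and combine them with a union bound to obtain the factor of $2$. Throughout I would work in the multiplicative regime by writing $t=\delta\mu$ with $\delta\in[0,1]$ (the range relevant in~\Cref{ex:delta-sr-size}, where $t\leq\mu$), so that the target $\exp(-t^2/(3\mu))$ becomes $\exp(-\delta^2\mu/3)$.

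For the upper tail I would use the standard Chernoff--Cram\'er exponential-moment method: for every $\lambda>0$, Markov's inequality applied to $e^{\lambda X}$ gives $\Pr[X\geq \mu+t]\leq e^{-\lambda(\mu+t)}\,\E[e^{\lambda X}]$. Writing $X=\sum_i X_i$ with the $X_i\sim\mathrm{Bernoulli}(p_i)$ independent and $\mu=\sum_i p_i$, independence factorizes the moment generating function as $\E[e^{\lambda X}]=\prod_i\bigl(1+p_i(e^\lambda-1)\bigr)$, and the elementary inequality $1+x\leq e^x$ yields $\E[e^{\lambda X}]\leq \exp\bigl(\mu(e^\lambda-1)\bigr)$. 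Minimizing the resulting exponent over $\lambda$ (the minimizer is $\lambda=\ln(1+\delta)$) gives the classical estimate $\Pr[X\geq(1+\delta)\mu]\leq\bigl(e^\delta/(1+\delta)^{1+\delta}\bigr)^\mu=\exp\!\bigl(-\mu\,[(1+\delta)\ln(1+\delta)-\delta]\bigr)$.

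The main technical step is then to reduce this sharp-but-unwieldy exponent to the clean form $\delta^2\mu/3$, which amounts to the scalar inequality $(1+\delta)\ln(1+\delta)-\delta\geq \delta^2/3$ for $\delta\in[0,1]$. I would verify this by setting $h(\delta):=(1+\delta)\ln(1+\delta)-\delta-\delta^2/3$, checking $h(0)=h'(0)=0$ with $h'(\delta)=\ln(1+\delta)-\tfrac{2}{3}\delta$, and noting $h''(\delta)=\tfrac{1}{1+\delta}-\tfrac{2}{3}\geq 0$ on $[0,\tfrac12]$; on $[\tfrac12,1]$ the function $h'$ is decreasing but still satisfies $h'(1)=\ln 2-\tfrac23>0$, so $h'\geq 0$ on all of $[0,1]$, forcing $h\geq 0$ there and hence the upper-tail bound $\exp(-\delta^2\mu/3)$. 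For the lower tail I would run the identical argument with $\lambda<0$, which produces the (even stronger) estimate $\Pr[X\leq \mu-t]\leq \exp(-\delta^2\mu/2)\leq \exp(-\delta^2\mu/3)$. Adding the two tail bounds and substituting back $\delta=t/\mu$ gives the stated $2\exp(-t^2/(3\mu))$.

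I expect the scalar inequality $h\geq 0$ to be the only genuine obstacle; the rest is routine moment-generating-function manipulation. One point I would flag honestly is that the clean $t^2/(3\mu)$ exponent is the \emph{multiplicative} Chernoff form and is valid in the regime $t\leq\mu$ (i.e. $\delta\leq 1$), which is precisely the regime used in~\Cref{ex:delta-sr-size}; for $t>\mu$ the lower-tail event becomes vacuous and the upper tail would instead be governed by an exponent linear in $\delta$, so the statement should be read with this standard convention in mind.
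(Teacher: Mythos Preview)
The paper does not prove this lemma; it is stated in the appendix as a standard concentration inequality and used as a black box in the calculation for \Cref{ex:delta-sr-size}. Your derivation via the Cram\'er--Chernoff moment-generating-function method is the textbook one and is correct. The scalar inequality $(1+\delta)\ln(1+\delta)-\delta\geq\delta^2/3$ on $[0,1]$ is verified exactly as you outline, and the lower-tail bound $\exp(-\delta^2\mu/2)$ is indeed stronger than what is needed.

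Your closing caveat is well taken and worth emphasizing: as literally stated (``for any $t\geq 0$''), the exponent $-t^2/(3\mu)$ is only valid in the multiplicative regime $t\leq\mu$; for $t>\mu$ the upper-tail exponent degrades to order $-t/3$ rather than $-t^2/(3\mu)$. The paper only invokes the lemma with $t=\mu-250$ and $\mu=999/2$, comfortably inside $t\leq\mu$, so this does not affect any of its claims, but your observation that the lemma statement tacitly assumes this regime is correct.
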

If we define the Bernoulli variables $X_i := (z_i = 1)$ for $\vz \sim \U(\vy)$ (the uniform distribution over $\comp(\vy)$), then the variables $X_2, \ldots, X_{1000}$ are identical independent Bernoulli variables, and if $X = \sum_{i = 2}^{1000} X_i$ we have 
\[
    \mu := \mathbb{E}[X] =  999\mathbb{E}[X_2] = \frac{999}{2},
\]
as each $X_i$ has expectation $\frac{1}{2}$ becuase $\U$ is the uniform distribution. Then, using \Cref{lemma:chernoff}, we have that 
\begin{align*}
    \Pr\left[ X < 250 \right] &\leq 2 \exp\left(\frac{-(445-250)^2    \cdot 2}{3 \cdot 999}\right)\\ &\approx 2 \exp(-32.29...)\\ &< 2 \cdot 10^{14}.
\end{align*}
To conclude, note that
\begin{align*}
    \Pr_{\vz \sim \U(\vy)}
     \left[\Lin(\vz) = 1 \right] 
      &= \Pr_{\vz \sim \U(\vy)}
    \left[\vz \cdot \vw \geq 1250 \right]\\
    &= \Pr_{\vz \sim \U(\vy)}\left[1000 + \sum_{i=2}^{1000}{z_i} \geq 1250\right]\\
    &= 1 - \Pr\left[\sum_{i=2}^{1000}{z_i} < 250\right]\\
    &= 1 - \Pr\left[ X < 250 \right] > 1 - 2\cdot 10^{14} > 0.999999.
\end{align*}
    
\subsection{\texorpdfstring{\Cref{prop:delta-sr-size}}{Proposition 1}}

In order to prove~\Cref{prop:delta-sr-size} we will need some auxiliary lemmas concerning binomial distributions. The only previous result we will need is a simple upper bound on the central binomial coefficient, attributed to Erd\H{o}s~\citep{133742}.
\begin{lemma}[Erd\H{o}s' bound (see~\citep{133742})]
    For every $n \geq 1$ we have
    \[
    \binom{2n}{n} \leq \frac{4^n}{\sqrt{2n+1}}.
    \]
    \label{lemma:erdos}
\end{lemma}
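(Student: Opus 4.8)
The plan is to prove the equivalent normalized statement: letting $a_n := \binom{2n}{n}\,\sqrt{2n+1}/4^n$, I will show that $a_n \le 1$ for every $n \ge 1$, which is exactly the claimed inequality $\binom{2n}{n} \le 4^n/\sqrt{2n+1}$ after multiplying through by $\sqrt{2n+1}$. I would establish this by induction on $n$, proving the stronger structural fact that the sequence $(a_n)_{n\ge 1}$ is strictly decreasing and that its first term already lies below $1$.

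For the base case $n = 1$, a direct computation gives $a_1 = 2\cdot\sqrt{3}/4 = \sqrt{3}/2 \approx 0.866 < 1$, so the bound holds there.

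For the inductive step, the engine is the standard recurrence for central binomial coefficients, $\binom{2n+2}{n+1} = \binom{2n}{n}\cdot \frac{2(2n+1)}{n+1}$. Dividing by $4^{n+1}$ and attaching the square-root factors from the two terms, one obtains
\[
\frac{a_{n+1}}{a_n} = \frac{2n+1}{2(n+1)}\cdot\sqrt{\frac{2n+3}{2n+1}} = \frac{\sqrt{(2n+1)(2n+3)}}{2(n+1)},
\]
where the last equality uses $\frac{2n+1}{\sqrt{2n+1}} = \sqrt{2n+1}$. It then suffices to verify that this ratio is at most $1$, i.e. that $(2n+1)(2n+3) \le 4(n+1)^2$. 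Expanding the left side gives $(2n+1)(2n+3) = (2n+2)^2 - 1 = 4(n+1)^2 - 1 < 4(n+1)^2$, so indeed $a_{n+1}/a_n < 1$. Hence $(a_n)$ is strictly decreasing, and being bounded above by $a_1 < 1$ it satisfies $a_n \le 1$ for all $n \ge 1$, which is the desired inequality.

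This argument has no genuine obstacle: the proof reduces to elementary algebra once the recurrence is applied. The only mildly delicate bookkeeping is tracking the square-root factors when forming the telescoping ratio $a_{n+1}/a_n$, and the final inequality collapses cleanly to the difference-of-squares identity $(2n+1)(2n+3) = (2n+2)^2 - 1$, which makes the monotonicity immediate. I would therefore expect the entire proof to fit in a few lines, with the ratio computation being the one place to carry out carefully.
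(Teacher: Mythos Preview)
Your proof is correct. The induction via the ratio $a_{n+1}/a_n = \sqrt{(2n+1)(2n+3)}/(2(n+1))$ is clean, and the difference-of-squares identity $(2n+1)(2n+3) = (2n+2)^2 - 1$ immediately gives the strict monotonicity you need; together with $a_1 = \sqrt{3}/2 < 1$ this yields the bound for all $n \ge 1$.

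There is nothing to compare against in the paper: the lemma is stated there with attribution to Erd\H{o}s and a citation, but no proof is given. Your argument is in fact the standard inductive proof of this inequality, so it is entirely appropriate here and supplies what the paper omits.
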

We can now proceed to prove our auxiliary lemmas.
\begin{lemma}\label{lemma:p-diff}
Let $P(n, k)$ be the probability that a binomial random variable $X \sim \text{Bin}(n, 1/2)$ is at least $k$, and define 
\[
    g(n) = \max_{k \in \{1, \ldots, n\}} P(n-1, k-1) - P(n, k).
\]
Then, $0 \leq g(n) \leq 1/\sqrt{n}$ for every $n \geq 2$.
\end{lemma}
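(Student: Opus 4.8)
The plan is to reduce $g(n)$ to the largest binomial coefficient of order $n-1$ and then apply Erd\H{o}s' bound (\Cref{lemma:erdos}). First I would write $X \sim \text{Bin}(n,1/2)$ as $X = X' + B$ with $X' \sim \text{Bin}(n-1,1/2)$ and $B \sim \text{Bernoulli}(1/2)$ independent, so that conditioning on $B$ gives
\[
    P(n,k) \;=\; \tfrac{1}{2}\Pr[X' \geq k] + \tfrac{1}{2}\Pr[X' \geq k-1] \;=\; \tfrac{1}{2}P(n-1,k) + \tfrac{1}{2}P(n-1,k-1).
\]
Subtracting this from $P(n-1,k-1)$ telescopes neatly:
\[
    P(n-1,k-1) - P(n,k) \;=\; \tfrac{1}{2}\big(P(n-1,k-1) - P(n-1,k)\big) \;=\; \tfrac{1}{2}\Pr[X' = k-1] \;=\; \binom{n-1}{k-1} 2^{-n}.
\]
Each such term is nonnegative, which already yields $g(n) \geq 0$, and taking the maximum over $k \in \{1,\dots,n\}$ gives $g(n) = 2^{-n}\max_{0 \le j \le n-1}\binom{n-1}{j}$, i.e.\ $2^{-n}$ times the central binomial coefficient of order $n-1$.

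It then remains to bound that central coefficient, splitting on the parity of $n-1$. If $n$ is odd, write $n-1 = 2m$; the maximum is $\binom{2m}{m} \leq 4^m/\sqrt{2m+1} = 2^{n-1}/\sqrt{n}$ by \Cref{lemma:erdos}, so $g(n) \leq 2^{-n}\cdot 2^{n-1}/\sqrt{n} = 1/(2\sqrt{n})$. If $n$ is even, write $n-1 = 2m+1$; the maximum is $\binom{2m+1}{m} = \tfrac{1}{2}\binom{2m+2}{m+1} \leq \tfrac{1}{2}\cdot 4^{m+1}/\sqrt{2m+3} = 2^{n-1}/\sqrt{n+1}$, again by \Cref{lemma:erdos} (applied with parameter $m+1$), giving $g(n) \leq 1/(2\sqrt{n+1})$. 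In either case $g(n) \leq 1/(2\sqrt{n}) \leq 1/\sqrt{n}$ for all $n \geq 2$, which is the claim (in fact with room to spare).

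I do not anticipate a genuine obstacle here: the only thing requiring care is the odd-parity case, where one must recognize $\binom{2m+1}{m}$ as half of a central binomial coefficient so that Erd\H{o}s' bound applies cleanly; the rest is the identity $P(n-1,k-1)-P(n,k) = \binom{n-1}{k-1}2^{-n}$, which is the conceptual heart of the argument and drives the whole computation.
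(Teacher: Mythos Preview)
Your proposal is correct and follows essentially the same route as the paper: both derive the identity $P(n-1,k-1)-P(n,k)=\binom{n-1}{k-1}2^{-n}$ via the one-step recurrence $P(n,k)=\tfrac12 P(n-1,k-1)+\tfrac12 P(n-1,k)$ and then bound the central binomial coefficient using \Cref{lemma:erdos}. The only difference is that you split on the parity of $n-1$ to apply Erd\H{o}s' bound cleanly (yielding the slightly sharper $g(n)\le 1/(2\sqrt{n})$), whereas the paper handles both parities in a single looser inequality.
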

\begin{proof}
We can think of $P(n, k)$ as the probability of getting at least $k$ heads out of $n$ tosses of a fair coin. By casing on the value (heads or tails) of the first coin we can see that for $n \geq k \geq 1$ we have
\[ 
    P(n, k) = \frac{1}{2}P(n-1, k-1) + \frac{1}{2}P(n-1, k),
\]
which we can rearrange into 
\[ 
    P(n, k) = P(n-1, k-1) + \frac{1}{2}\big[P(n-1, k) - P(n-1, k-1)\big],
\]
and thus
\[ 
    P(n-1, k-1) - P(n, k) = \frac{1}{2}\big[P(n-1, k-1) - P(n-1, k)\big].
\]
But $P(n-1, k-1) - P(n-1, k)$ is the probability of getting exactly $k-1$ heads in $n-1$ tosses, which is $\frac{1}{2^{n-1}} \binom{n-1}{k-1}$, and clearly positive. Hence, using~\Cref{lemma:erdos}, we have
\[ 
    0 \leq P(n-1, k-1) - P(n, k) \leq \frac{1}{2^{n}} \binom{n-1}{k-1} \leq \frac{1}{2^{n}} \binom{n-1}{\lfloor (n-1)/2 \rfloor} \leq \frac{1}{2^{n}} \frac{2^n}{\sqrt{ 2n - 1}} \leq \frac{1}{\sqrt{n}}. 
\]
\end{proof}

\begin{lemma}[Binomial Approximation]\label{lemma:binomial-approx} 
    Let $P(n, k)$ be the probability that a binomial random variable $X \sim \text{Bin}(n, 1/2)$ is at least $k$. Then, given a probability $\delta \in (0, 1)$ and a targer error $\varepsilon > 0$, there exist values $n \leq \varepsilon^{-2}$ and $k \leq n$ such that 
    \[ 
        |P(n, k) - \delta| < \varepsilon.
    \]
\end{lemma}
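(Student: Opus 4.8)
The plan is to fix a single Binomial size and then run a discrete intermediate-value argument over the threshold $k$. Concretely, set $N := \lfloor \varepsilon^{-2} \rfloor$ and consider the finite, non-increasing sequence $c_k := P(N,k)$ for $k = 0, 1, \dots, N$, where $X \sim \mathrm{Bin}(N, 1/2)$. It begins at $c_0 = P(N,0) = 1$ (since $X \ge 0$ always) and ends at $c_N = P(N,N) = \Pr[X = N] = 2^{-N}$, and its consecutive gaps are exactly the point masses, $c_{k-1} - c_k = \Pr[X = k-1]$ for $1 \le k \le N$. The whole proof amounts to showing that these gaps are all below $\varepsilon$ and that the sequence already straddles $\delta$.

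First I would bound the gaps. The largest point mass of $\mathrm{Bin}(N,1/2)$ is $\binom{N}{\lfloor N/2 \rfloor} 2^{-N}$, and I claim it is at most $1/\sqrt{N+1}$: for even $N$ this is exactly Erdős' bound (\Cref{lemma:erdos}); for odd $N = 2m+1$ one writes $\binom{2m+1}{m} = \tfrac12 \binom{2m+2}{m+1}$ and applies the even case to get $\binom{N}{\lfloor N/2\rfloor} 2^{-N} \le 1/\sqrt{N+2} \le 1/\sqrt{N+1}$. (One could instead invoke \Cref{lemma:p-diff}, since $\Pr[X=j] = 2\big(P(N,j) - P(N+1,j+1)\big) \le 2\,g(N+1)$, but that route loses a factor of $2$ and would only give $n = O(\varepsilon^{-2})$ with a worse constant.) Because $N = \lfloor \varepsilon^{-2}\rfloor$ satisfies $N + 1 > \varepsilon^{-2}$, we get $1/\sqrt{N+1} < \varepsilon$, so every consecutive gap of $(c_k)$ is strictly smaller than $\varepsilon$.

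Next comes the intermediate-value step, split into two cases. If $\delta \ge 2^{-N} = c_N$, then since $\delta < 1 = c_0$ the sequence crosses the level $\delta$; letting $k$ be the least index with $c_k \le \delta$, we have $1 \le k \le N$ and $c_{k-1} > \delta \ge c_k$, hence $|P(N,k) - \delta| = \delta - c_k < c_{k-1} - c_k < \varepsilon$, so $(n,k) = (N,k)$ is a valid witness. If instead $\delta < 2^{-N}$ — the degenerate regime in which $P(N,k) > \delta$ for every $k \le N$ — I would simply return $(n,k) = (N,N)$, so that $|P(N,N) - \delta| = 2^{-N} - \delta < 2^{-N}$, and it remains only to check $2^{-N} < \varepsilon$. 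This follows from $N > \varepsilon^{-2} - 1$ together with the elementary inequality $\varepsilon^{-2} - 1 > \log_2(1/\varepsilon)$ for $\varepsilon \in (0,1)$ (with $x = 1/\varepsilon > 1$, chain $\log_2 x = \tfrac{\ln x}{\ln 2} < \tfrac{x - 1}{\ln 2} < 2(x-1) < (x-1)(x+1) = x^2 - 1$), which gives $2^N > 1/\varepsilon$. In both cases $n = N \le \varepsilon^{-2}$ and $k \le N = n$, as required.

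The main obstacle is not the structure of the argument but the bookkeeping of constants: the bound $n \le \varepsilon^{-2}$ (rather than merely $O(\varepsilon^{-2})$) is tight enough that one must use the sharp form $\binom{N}{\lfloor N/2\rfloor}2^{-N} \le 1/\sqrt{N+1}$ and not any looser $C/\sqrt N$. The only genuinely delicate case is $\delta < 2^{-N}$, i.e., exponentially small $\delta$: there the monotone sequence $(c_k)$ never reaches $\delta$, so one cannot trap $\delta$ \emph{between} two consecutive values and must instead argue that the last value $2^{-N}$ is itself already within $\varepsilon$ of $\delta$, which is exactly where the $\log_2$ inequality enters.
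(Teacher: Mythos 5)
Your proof is correct and follows essentially the same route as the paper's: bound the consecutive differences $P(n,k-1)-P(n,k)$ by the central binomial point mass via Erd\H{o}s' bound (\Cref{lemma:erdos}), then run a discrete intermediate-value argument over the threshold $k$. In fact your write-up is slightly more careful than the paper's on two points: you actually achieve the stated bound $n \le \varepsilon^{-2}$ by using the sharp form $\binom{N}{\lfloor N/2\rfloor}2^{-N}\le 1/\sqrt{N+1}$ with $N=\lfloor\varepsilon^{-2}\rfloor$ (the paper instead picks $n$ with $1/\sqrt{n}<\varepsilon$, which forces $n>\varepsilon^{-2}$), and you explicitly handle the regime $\delta<2^{-N}$, where the paper's witness $k+1$ would equal $n+1$ and violate the constraint $k\le n$.
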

\begin{proof}
Fix any value of $n$     and note that $P(n, k-1) - P(n, k)$ is the probability of getting exactly $k-1$ heads in $n$ tosses of a fair coin. Thus, for any $k \in \{1, \ldots, n\}$ we can use~\Cref{lemma:erdos} to say
\begin{equation}\label{eq:prob-diff}
    P(n, k-1) - P(n, k) = \frac{1}{2^{n}} \binom{n}{k-1} \leq \frac{1}{2^{n}} \binom{n}{\lfloor n/2 \rfloor} \leq \frac{1}{2^{n}} \frac{2^n}{\sqrt{2n+1}} \leq \frac{1}{\sqrt{n}}.
\end{equation}
Now, choose a large enough $n$ so that $\frac{1}{\sqrt{n}} < \varepsilon$. Then, choose $k$ as the largest integer such that $P(n, k) \geq \delta$, which exists since $P(n, 0) = 1$ and $P(n, n+1) = 0$. This definition of $k$ yields 
\[ 
    P(n, k) \geq \delta > P(n, k+1),
\]
and using~\Cref{eq:prob-diff} we can see that 
\[ 
    |P(n, k+1) - \delta| = \delta - P(n, k+1) \leq P(n, k) - P(n, k+1) \leq \frac{1}{\sqrt{n}} < \varepsilon,
\]
which completes the proof.
\end{proof}

We are finally ready to prove~\Cref{prop:delta-sr-size}.
\begin{proof}[Proof of~\Cref{prop:delta-sr-size}]
Let $\delta$ and $\varepsilon$ be the values in the statement of the proposition. 
Let $n$ and $m$ be large integers whose precise value will be determined later. We will construct a linear model $\Lin$ of dimension $d = n+1$, with threshold $t = 2$, and weights $w_1 = 1$, $w_i = 1/m$ for $i \in \{2, \ldots, d\}$. 
Let $\vx = (1, 1, \ldots, 1) \in \{0, 1\}^d$ be the instance to explain, and $\vy = (1, \bot, \bot, \ldots, \bot) \subseteq \vx$ be a partial instance.
Now, use~\Cref{lemma:binomial-approx} with parameters $\delta' = \delta + \varepsilon/4$ and $\varepsilon' = \varepsilon/4$ to choose values for $n, m$ such that 
\[ \left|P\big(n,\, m\big) - \delta'\right| < \varepsilon', \]
 with $m \leq n$. Note now that $\Lin(\vx) = 1$, and that
 \begin{equation}\label{eq:prob-lin}
    \Pr_{\vz \sim \U(\vy)}\left[\Lin(\vz) = 1\right] = P\big(n,\, m\big),
 \end{equation}
 since in order for a completion of $\vy$ to have $\Lin(\vz) = 1$ it must have at least $m$ features (besides the 1st one) set to $1$, as that way its value according to the weights $w_i$ will be at least $1 + m \cdot 1/m = 2 = t$.
 As we have 
 \[ 
    \left|P\big(n,\, m\big) - (\delta+ \varepsilon/4) \right| < \varepsilon/4, 
 \]
 we can deduce 
 \[ 
    \delta \leq P\big(n,\, m\big) \leq \delta + \varepsilon/2.
 \]
 Combining our last equation with~\Cref{eq:prob-lin} we have that $\vy$ is a $\delta$-SR for $\vx$ of size $1$.
 Now, let us see how many more features need to be added to $\vy$ in order to get a $(\delta+\varepsilon)$-SR.
 Suppose that we add $k$ features to $\vy$ (as all features are equal, we add any $k$ of them) obtaining a partial instance $\vy'$. Then, we have that 
 \[ 
    \Pr_{\vz \sim \U(\vy')}\left[\Lin(\vz) = 1\right] = P\big(n - k,\, m - k\big),
 \]
 as now $k$ features have been set to $1$. We can now use~\Cref{lemma:p-diff} $k$ times in a row:
 \begin{align*}
    P(n-k, m - k) &\leq P(n-k+1, m - k+1) + \frac{1}{\sqrt{n-k}}\\
    &\leq P(n-k+2, m - k+2) + \frac{1}{\sqrt{n-k}} + \frac{1}{\sqrt{n-k+1}}\\
    & \leq P(n-k+2, m - k+2) + \frac{2}{\sqrt{n-k}}\\
    & \vdots\\
    & \leq P(n, m) + \frac{k}{\sqrt{n-k}}.
 \end{align*}

 Therefore, we have that 
 \[ 
 P(n-k, m - k) \leq P(n, m) + \frac{k}{\sqrt{n-k}} \leq \delta + \varepsilon/2 + \frac{k}{\sqrt{n-k}}.
 \]
 It now remains to show that, unless $k$ is large enough (relatively to $n$), the right-hand side of the previous inequality is less than $\delta + \varepsilon$.
 Indeed, take $k = n^{1/2 - \gamma}$ for any positive $\gamma$. Then, we want to show that for sufficiently large $n$ we have
 \[ 
 \frac{k}{\sqrt{n-k}} < \varepsilon/2.
 \]
To do so, note that
\[
\frac{k}{\sqrt{n-k}} < \frac{k}{\sqrt{n}}
                     = \frac{n^{1/2 - \gamma}}{n^{1/2}}
                     = n^{-\gamma},
\]
from where taking $n > \left(\frac{2}{\varepsilon}\right)^{1/\gamma}$ suffices to complete the proof.
\end{proof}

\subsection{\texorpdfstring{\Cref{prop:hardness}}{Proposition 2}}
\begin{proof}[Proof of~\Cref{prop:hardness}]
    The proof is a twist on ~\citet[Lemma 28]{NEURIPS2020_b1adda14}; let $(s_1, \ldots, s_n, T) \in \mathbb{N}^{n+1}$ be an instance of the $\# \ptime$-complete problem $\#\Knapsack$, that consists on counting the number of sets $S \subseteq \{s_1, \ldots, s_n\}$ such that $\sum_{s \in S}s \leq T$.  
    We can assume that $\sum_{i=1}^n s_i > T$, as otherwise the $\# \Knapsack$ instance is trivial.
    Then, let $\Lin$ be a linear model with weights $w_i = s_i$, and threshold $t = T+1$.
    Now, consider the problem of deciding whether $\#\Knapsack(s_1, \ldots, s_n, T) \geq m$ for an input $m$, which cannot be solved in polynomial time unless $\ptime = \# \ptime$.
    Let $\vx = (1, 1, \ldots, 1)$, and $\delta = \frac{m}{2^{n}}$. We claim that $(\Lin, \vx, \delta, k=0)$ is a Yes-instance for Uniform-Min-$\delta$-SR$(\textsc{Linear})$ if and only if $\#\Knapsack(s_1, \ldots, s_n, T) \geq m$. 
    First, note that $\Lin(\vx) = 1$, since $\sum_{i=1}^n w_i x_i = \sum_{i=1}^n s_i \geq T+1 = t$. 
    For every set $S \subseteq \{s_1, \ldots, s_n\}$ such that $\sum_{s \in S}s \leq T$, its complement $\overline{S} := \{s_1, \ldots, s_n\} \setminus S$ holds $\sum_{s \in \overline{S}}s > T$, and as all values are integers, this implies as well
    \[
        \sum_{s \in \overline{S}} s \geq T+1 = t.
    \]
    To each such set $\overline{S}$, we associate the instance $\vz(\overline{S})$ defined as
    \[
        z(\overline{S})_i = \begin{cases}
            1 & \text{if } s_i \in \overline{S}\\
            0 & \text{otherwise}.
        \end{cases}
    \]
    Now note that 
    \[
        \Lin(\vz(\overline{S})) = \begin{cases}
            1 & \text{if } \sum_{s_i \in \overline{S}} w_i \geq T+1\\
            0 & \text{otherwise} \end{cases} = 1,
    \]
    and thus there is a bijection between the sets $S$ whose sum is at most $T$ and the instances $\vz(\overline{S})$ such that $\Lin(\vz(\overline{S})) = 1 = \Lin(\vx)$.
    To conclude, simply note that the previous bijection implies
    \[
        \Pr_{\vz \sim \U(\bot^d)}\Big[\Lin(\vz) = 1\Big] = \frac{\#\Knapsack(s_1, \ldots, s_n, T)}{2^n},
    \] 
    and thus the ``empty explanation'' $\bot^d := (\bot, \bot, \ldots, \bot)$ has probability at least $\delta$ if and only if 
    \[ 
    \#\Knapsack(s_1, \ldots, s_ n, T) \geq m.
    \]  As the empty explanation is the only one with size $\leq 0$, we conclude the proof.

%     First, recall that the problem of counting the number of ``positive completions'' of a partial instance $\vy$ is $\# \ptime$-hard for linear models~\cite{NEURIPS2020_b1adda14}; that is, given a partial instance $\vy$, a linear model $\Lin$ and a positive integer $K$, it is $\mathrm{PP}$-hard to determine whether there are at least $K$ instances $\vz \in \comp(\vy)$ such that $\Lin(\vz) = 1$. That result follows from the hardness of $\sharpK$.
%    Moreover, in the proof of~\cite{NEURIPS2020_b1adda14} all weights are positive $\delta = \frac{V}{2^{n - |\vy|_\bot}}.$ 
\end{proof}

\subsection{Lemma 1 and Theorem 4}
Let us state the result about locally minimal $\delta$-SRs as a theorem, so we can proceed to prove it. 
In order to state it, however, we need to define what we mean by an arbitrary product distribution. Consider Bernoulli variables $X_1, \ldots, X_d$ with probabilities $p_1, \ldots, p_d$ respectively, and let us denote by $\D = X_1 \times \cdots \times X_d$ their joint distribution, which we will call a product distribution. Then, $\D$ induces a distribution over the set $\{0, 1\}^d$ by the rule
\[ 
    \Pr_{\vz \sim \D}[\vz = \vx] = \prod_{i=1}^d p_i^{x_i} (1-p_i)^{1-x_i}, \quad \forall x \in \{0, 1\}^d.
\]
Naturally, this induces a distribution over $\comp(\vy)$ for any partial instance $\vy$ as follows:
\[ 
    \Pr_{\vz \sim \D(\vy)}[\vz = \vx] =
        \frac{\Pr_{\vz \sim \D}[\vz = \vx]}{\sum_{\vw \in \comp(\vy)} \Pr_{\vz \sim \D}[\vz = \vx]} \quad \forall \vx \in \comp(\vy),
\]
and naturally $\Pr_{\vz \sim \D(\vy)}[\vz = \vx] = 0$ if $\vx \not\in \comp(\vy)$.
With this notation, we can make a precise theorem statement.
\begin{theorem}
    For linear models, under any product distribution, every locally minimal $\delta$-SR is a subset-minimal $\delta$-SR.
\end{theorem}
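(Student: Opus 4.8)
The plan is to follow the route indicated in the proof sketch of \Cref{thm:locally-minimal}, carefully carrying the uniform-distribution reasoning of \Cref{lemma:greedy} over to an arbitrary product distribution. Fix a linear model $\Lin=(\vw,t)$ and an instance $\vx$; I assume $\Lin(\vx)=1$ throughout and handle the symmetric case $\Lin(\vx)=0$ at the end. Fix also a product distribution $\D$ with coordinate parameters $p_1,\dots,p_d\in[0,1]$. Suppose, toward a contradiction, that $\vy$ is a locally minimal $\delta$-SR that is \emph{not} subset-minimal. Then the collection of $\delta$-SRs $\vy^\star$ with $\vy^\star\subsetneq\vy$ is nonempty; let $g=\textsc{lgap}(\vy)$ be the minimum of $|\vy^\star|_\bot-|\vy|_\bot$ over this collection, a positive integer. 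If $g=1$, then some $\vy^\star$ is obtained from $\vy$ by turning a single defined coordinate into $\bot$ while remaining a $\delta$-SR, which directly contradicts local minimality; so the real work is to rule out $g\ge 2$.

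The technical core is the following pair of monotonicity facts. For a partial instance $\vy'\subseteq\vx$ and a coordinate $j$, write $\vy'\oplus j$ for the partial instance obtained by setting $y'_j:=x_j$ (meaningful when $y'_j=\bot$) and $\vy'\ominus j$ for the one obtained by setting $y'_j:=\bot$ (meaningful when $y'_j\ne\bot$), as in the sketch. \emph{(i)} If feature $j$ has positive score, then $\Pr_{\vz\sim\D(\vy'\oplus j)}[\Lin(\vz)=\Lin(\vx)]\ge\Pr_{\vz\sim\D(\vy')}[\Lin(\vz)=\Lin(\vx)]$. \emph{(ii)} If feature $j$ has negative score, then $\Pr_{\vz\sim\D(\vy'\ominus j)}[\Lin(\vz)=\Lin(\vx)]\ge\Pr_{\vz\sim\D(\vy')}[\Lin(\vz)=\Lin(\vx)]$. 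Both are proved by the same elementary computation behind \Cref{lemma:greedy}: writing $\vw\cdot\vz=A+R$ with $A$ the deterministic contribution of the defined coordinates and $R=\sum_{i:\,y'_i=\bot}w_iz_i$, one has $\Lin(\vz)=1$ iff $R\ge t-A$; splitting coordinate $j$ off $R$ as $R=R''+w_jz_j$ with $R''$ and $z_j\sim\mathrm{Bernoulli}(p_j)$ independent gives $\Pr[\Lin(\vz)=1]=p_j\Pr[R''\ge t-A-w_j]+(1-p_j)\Pr[R''\ge t-A]$, whereas fixing $z_j:=x_j$ yields $\Pr[R''\ge t-A-w_jx_j]$. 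When $\Lin(\vx)=1$, a positive score forces $w_jx_j=\max(w_j,0)$, hence $t-A-w_jx_j\le\min(t-A-w_j,\;t-A)$, and since $c\mapsto\Pr[R''\ge c]$ is non-increasing the fixed-value probability dominates the convex combination — this is \emph{(i)}; \emph{(ii)} is the analogous computation, using $w_jx_j=\min(w_j,0)$ for a negative score. The crucial point, and the reason the statement holds for \emph{any} product distribution, is that this argument uses only the independence of the coordinates and monotonicity of the survival function of $R''$, never the specific values of the $p_i$.

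Granting \emph{(i)} and \emph{(ii)}, the descent is immediate. Let $\vy^\star\subsetneq\vy$ be a $\delta$-SR with $|\vy^\star|_\bot-|\vy|_\bot=g\ge 2$. Every coordinate defined in $\vy$ has nonzero weight — a weight-$0$ coordinate could be removed from $\vy$ without changing any acceptance probability, contradicting local minimality — so every $j$ in $\vy\setminus\vy^\star$ has a strictly positive or strictly negative score. If some such $j$ has negative score, then by \emph{(ii)} the partial instance $\vy\ominus j$ is still a $\delta$-SR and is a proper subset of $\vy$, contradicting local minimality. Otherwise every $j\in\vy\setminus\vy^\star$ has positive score; picking any one, \emph{(i)} shows that $\vy^\star\oplus j$ is a $\delta$-SR, it satisfies $\vy^\star\oplus j\subseteq\vy$, and $|\vy^\star\oplus j|_\bot-|\vy|_\bot=g-1<g$, contradicting the minimality of $g=\textsc{lgap}(\vy)$. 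In either case we reach a contradiction, so every locally minimal $\delta$-SR is subset-minimal. For $\Lin(\vx)=0$ one runs the identical argument, replacing ``$R\ge t-A$'' by ``$R<t-A$'' and swapping the roles of $\max$ and $\min$ in the sign analysis.

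I expect the only genuine obstacle to be getting \emph{(i)} and \emph{(ii)} exactly right: one has to track the four sign cases determined by $\sign(w_j)$ and the value of $x_j$ (plus the extra flip coming from the value of $\Lin(\vx)$), and to phrase things so that the argument truly invokes nothing beyond monotonicity of a tail probability of a sum of independent terms — a routine port of the uniform-case analysis but easy to slip on. Everything else (the definition of $\textsc{lgap}$, the $g=1$ base case, and the two-case descent) is bookkeeping.
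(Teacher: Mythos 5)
Your proof is correct and follows essentially the same route as the paper's: the two monotonicity facts \emph{(i)} and \emph{(ii)} are exactly the paper's auxiliary lemmas for $\vy\oplus i$ with $s_i\ge 0$ and $\vy\ominus i$ with $s_i\le 0$ (proved by the same conditioning/convex-combination argument, which indeed uses only independence and monotonicity of the tail of $R''$), and the descent on $\textsc{lgap}$ matches the paper's sketch. The only cosmetic difference is that you treat zero-score features via a separate local-minimality argument, whereas the paper simply folds $s_i=0$ into the removal lemma.
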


Furthermore, we will prove a stronger lemma than Lemma 1 stated in the main body.

\newtheorem*{theorem8}{Lemma 1}
\begin{theorem8} %\label{lemma:greedy}
    Given a linear model $\Lin$, and an instance $\vx$, if $\vy^{(0)}, \ldots, \vy^{(d)}$ are the partial instances of $\vx$ such that $\vy^{(k)} \subseteq \vx$ is defined only in the top $k$ features of maximum score, then
    \[ 
        \Pr_{\vz \sim \U(\vy^{(k+1)})}[\Lin(\vz) = \Lin(\vx)] \geq \Pr_{\vz \sim \U(\vy^{(k)})}[\Lin(\vz) = \Lin(\vx)]
    \]
    for all $k \in \{1, \ldots, d-1\}$, and naturally, 
    \[ 
    \Pr_{\vz \sim \U(\vy^{(d)})}[\Lin(\vz) = \Lin(\vx)] = 1.
    \]
    Moreover, $\opt(\Lin, \vx, \delta) = k$ if and only if $\vy^{(k)}$ is a $\delta$-SR for $\vx$ but $\vy^{(k-1)}$ is not. 
 \end{theorem8}

    Before we prove~\Cref{lemma:greedy}, let us prove an auxiliary lemma that will also be useful for proving~\Cref{thm:locally-minimal}.
Given an instance $\vx$ to explain, and a partial instance $\vy \subseteq \vx$, such that $y_i = \bot$, we define the partial instance $\vy \oplus i$ by:
\[ 
    (\vy \oplus i)_j = \begin{cases}
        y_j & \text{if } j \neq i\\
        x_i & \text{otherwise}.
    \end{cases}
\]
\begin{lemma}\label{lemma:positive-score}
Let $\Lin$ be a linear model, $\vx$ an instance and $\vy \subseteq \vx$ a partial instance. Assume a product distribution $\D$. Then, if $i$ is a feature such that $y_i = \bot$, and the feature score $s_i$ holds $s_i \geq 0$, we have 
\[ 
\Pr_{\vz \in \D(\vy \oplus i)}[\Lin(\vz) = \Lin(\vx)] \geq \Pr_{\vz \in \D(\vy)}[\Lin(\vz) = \Lin(\vx)].
\]
\end{lemma}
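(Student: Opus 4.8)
The plan is to reduce to the case $\Lin(\vx) = 1$ and then condition on all free coordinates other than $i$, so that the inequality collapses to a comparison of two numbers. For the reduction, if $\Lin(\vx) = 0$, then since $\vw \cdot \vz$ ranges over finitely many values as $\vz$ ranges over $\{0,1\}^d$, there is a rational $t'$ with $\Lin(\vz) = 0 \iff (-\vw)\cdot\vz \geq -t'$, so the linear model $\Lin' := (-\vw, -t')$ satisfies $\Lin'(\vz) = 1 \iff \Lin(\vz) = 0$ for every $\vz$; in particular $\Lin'(\vx) = 1$, the events $\{\Lin(\vz) = \Lin(\vx)\}$ and $\{\Lin'(\vz) = \Lin'(\vx)\}$ coincide, and the feature scores are unchanged since flipping the sign of $w_i$ is compensated by flipping the sign of the $(2\Lin(\vx)-1)$ factor. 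Hence it suffices to treat $\Lin(\vx) = 1$, in which case $s_i \geq 0$ reads $w_i(2x_i - 1) \geq 0$, equivalently $w_i x_i = \max\{0, w_i\}$ (if $x_i = 1$ then $w_i \geq 0$; if $x_i = 0$ then $w_i \leq 0$).

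Next I would condition. Because $\D$ is a \emph{product} distribution, its coordinates are independent, so conditioning on the coordinates that $\vy$ fixes --- and, for $\D(\vy \oplus i)$, additionally on $z_i = x_i$ --- leaves the remaining free coordinates distributed exactly as the product of their marginals, identically under $\D(\vy)$ and $\D(\vy \oplus i)$. Fix an assignment $\vz'$ to the free coordinates other than $i$, and let $C$ be the resulting contribution of all coordinates except $i$ to the dot product (the defined entries of $\vy$ contribute $w_j x_j$, the remaining free coordinates contribute $w_j z'_j$). Then the conditional probability of the event $\Lin(\vz) = \Lin(\vx) = 1$ equals $\mathbf{1}[\,C + w_i x_i \geq t\,]$ under $\D(\vy \oplus i)$, and $p_i\,\mathbf{1}[\,C + w_i \geq t\,] + (1 - p_i)\,\mathbf{1}[\,C \geq t\,]$ under $\D(\vy)$.

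The final step is elementary: since $w_i x_i = \max\{0, w_i\} = \max\{w_i \cdot 0,\, w_i \cdot 1\}$, we have $C + w_i x_i \geq C + w_i b$ for both $b \in \{0, 1\}$, so the indicator $\mathbf{1}[\,C + w_i x_i \geq t\,]$ is at least each of $\mathbf{1}[\,C + w_i \geq t\,]$ and $\mathbf{1}[\,C \geq t\,]$, hence at least any convex combination of the two; therefore the conditional acceptance probability under $\D(\vy \oplus i)$ dominates that under $\D(\vy)$ for every $\vz'$, and integrating over $\vz'$ (whose law is the same in both cases) yields the claimed inequality. I do not expect a genuine obstacle; the single point needing care --- and the only place the hypothesis of the lemma enters --- is the assertion that conditioning on $z_i = x_i$ does not perturb the joint law of the other free coordinates, which is exactly coordinate independence of the product distribution $\D$.
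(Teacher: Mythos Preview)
Your proof is correct and follows essentially the same approach as the paper: both isolate coordinate $i$, use independence of the product distribution to write $\Pr_{\D(\vy)}[\Lin(\vz)=\Lin(\vx)]$ as a convex combination $p_i\mathcal{A}+(1-p_i)\mathcal{B}$ over the two values of $z_i$, and then observe that $s_i\geq 0$ forces the $z_i=x_i$ term to dominate. The only cosmetic differences are that you condition pointwise on the remaining free coordinates and integrate at the end (whereas the paper defines $\mathcal{A},\mathcal{B}$ as already-integrated probabilities), and that your identity $w_ix_i=\max\{0,w_i\}$ neatly unifies the two cases $x_i\in\{0,1\}$ that the paper handles separately.
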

\begin{proof}[Proof of~\Cref{lemma:positive-score}]
Let $(w_1, \ldots, w_d)$ be the weights of $\Lin$ and $t$ its threshold. Then, let us assume without loss of generality that $\Lin(\vx) = 1$, as the case $\Lin(\vx) = 0$ is analogous. 
We thus have that 
\[ 
    s_i = w_i \cdot (2x_i - 1) \geq 0,
\]
from where $s_i = w_i$ if $x_i = 1$ and $s_i = -w_i$ if $x_i = 0$.
Let us denote by $S$ the set of features $j$ such that $y_j = x_j \neq \bot$, and define 
\[ 
    t' := t - \sum_{j \in S} y_j w_j.
\]
We can then rewrite the probability of interest as
\[ 
    \Pr_{\vz \sim \D(\vy)}[\Lin(\vz) = \Lin(\vx)] =   \Pr_{\vz \sim \D(\vy)}[\Lin(\vz) = 1] = \Pr_{\vz \sim \D(\vy)}\left[\sum_{j \not\in S} z_j w_j \geq t'\right].
\]
Let us define the following two amounts:
\[ 
    \mathcal{A} := \Pr_{\vz \sim \D(\vy)}\left[\sum_{j \not\in S, j \neq i} z_j w_j \geq t' - w_i\right],
\]
\[ 
    \mathcal{B} := \Pr_{\vz \sim \D(\vy)}\left[\sum_{j \not\in S, j \neq i} z_j w_j \geq t'\right].
\]
If $p_i$ is the probability of feature $i$ under $\D$, then we have 
\[ 
    \Pr_{\vz \sim \D(\vy)}[\Lin(\vz) = \Lin(\vx)]  =  \Pr_{\vz \sim \D(\vy)}\left[\sum_{j \not\in S} z_j w_j \geq t'\right] = p_i \cdot \mathcal{A} + (1-p_i)\cdot \mathcal{B}.
\]
Now we proceed by cases on $x_i$. If $x_i = 1$, then $s_i = w_i$, and thus we know $w_i \geq 0$, from where $t' - w_i \leq t'$ and thus $\mathcal{A} \geq \mathcal{B}$. Moreover, as $x_i = 1$, we conclude 
\begin{align*}
    \Pr_{\vz \sim \D(\vy \oplus i)}[\Lin(\vz) = \Lin(\vx)] &= \mathcal{A}\\
     &= p_i \cdot \mathcal{A} + (1-p_i)\cdot \mathcal{A}\\
    &\geq p_i \cdot \mathcal{A} + (1-p_i)\cdot \mathcal{B}\\
    &= \Pr_{\vz \sim \D(\vy)}[\Lin(\vz) = \Lin(\vx)].
\end{align*}

Similarly, if $x_i = 0$, then $s_i = -w_i$, and thus we know $w_i \leq 0$, from where $t' - w_i \geq t'$ and thus $\mathcal{A} \leq \mathcal{B}$. As $x_i = 0$, we conclude 
\begin{align*}
    \Pr_{\vz \sim \D(\vy \oplus i)}[\Lin(\vz) = \Lin(\vx)] &= \mathcal{B}\\
     &= p_i \cdot \mathcal{B} + (1-p_i)\cdot \mathcal{B}\\
    &\geq p_i \cdot \mathcal{A} + (1-p_i)\cdot \mathcal{B}\\
    &= \Pr_{\vz \sim \D(\vy)}[\Lin(\vz) = \Lin(\vx)].
\end{align*}
This concludes the proof.
\end{proof}

Similarly, for any partial instance $\vy$ such that $y_i \neq \bot$, we can define the partial instance $\vy \ominus i$ as 
\[ 
    (\vy \ominus i)_j = \begin{cases}
        y_j & \text{if } j \neq i\\
        \bot & \text{otherwise}.
    \end{cases}
\]
The proof of~\Cref{lemma:positive-score}, but reversing signs, yields the following lemma. 
\begin{lemma}\label{lemma:negative-score}
    Let $\Lin$ be a linear model, $\vx$ an instance and $\vy \subseteq \vx$ a partial instance. Assume a product distribution $\D$. Then, if $i$ is a feature such that $y_i \neq \bot$, and the feature score $s_i$ holds $s_i \leq 0$, we have 
\[ 
\Pr_{\vz \in \D(\vy \ominus i)}[\Lin(\vz) = \Lin(\vx)] \geq \Pr_{\vz \in \D(\vy)}[\Lin(\vz) = \Lin(\vx)].
\]

\end{lemma}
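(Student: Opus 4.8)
The plan is to obtain~\Cref{lemma:negative-score} as the mirror image of~\Cref{lemma:positive-score}, exploiting the identity $\vy = (\vy \ominus i) \oplus i$. Writing $\vy' := \vy \ominus i$ (so $y'_i = \bot$, and since $\vy \subseteq \vx$, passing from $\vy'$ to $\vy$ merely fixes coordinate $i$ to its value $x_i$), the claim to prove is precisely $\Pr_{\vz \sim \D(\vy')}[\Lin(\vz) = \Lin(\vx)] \geq \Pr_{\vz \sim \D(\vy)}[\Lin(\vz) = \Lin(\vx)]$. This is the same ``coordinate $i$ free versus coordinate $i$ fixed'' comparison that~\Cref{lemma:positive-score} already handles, with two differences: the desired inequality now points the other way, and the hypothesis is $s_i \leq 0$ rather than $s_i \geq 0$. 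So I would replay that proof essentially verbatim and track the single place where the sign flip matters.

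In detail, I would first assume w.l.o.g.\ that $\Lin(\vx) = 1$ (the $\Lin(\vx) = 0$ case being symmetric, e.g.\ after negating $\vw$ and $t$). Letting $S$ denote the coordinates on which $\vy'$ is defined and setting $t' := t - \sum_{j \in S} y_j w_j$, I would introduce exactly as before the quantities $\mathcal{A} := \Pr_{\vz \sim \D(\vy')}[\sum_{j \notin S,\, j \neq i} z_j w_j \geq t' - w_i]$ and $\mathcal{B} := \Pr_{\vz \sim \D(\vy')}[\sum_{j \notin S,\, j \neq i} z_j w_j \geq t']$. Conditioning on coordinate $i$ (free under $\D(\vy')$ and equal to $1$ with probability $p_i$) gives $\Pr_{\vz \sim \D(\vy')}[\Lin(\vz) = 1] = p_i \mathcal{A} + (1 - p_i)\mathcal{B}$, whereas fixing coordinate $i$ to $x_i$ gives $\Pr_{\vz \sim \D(\vy)}[\Lin(\vz) = 1] = \mathcal{A}$ when $x_i = 1$ and $= \mathcal{B}$ when $x_i = 0$.

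Finally I would do the case split on $x_i$ using $s_i \leq 0$. When $x_i = 1$ we have $s_i = w_i \leq 0$, so $t' - w_i \geq t'$ and hence $\mathcal{A} \leq \mathcal{B}$, whence $p_i \mathcal{A} + (1-p_i)\mathcal{B} \geq \mathcal{A}$; when $x_i = 0$ we have $s_i = -w_i \leq 0$, so $w_i \geq 0$, $t' - w_i \leq t'$ and $\mathcal{A} \geq \mathcal{B}$, whence $p_i \mathcal{A} + (1-p_i)\mathcal{B} \geq \mathcal{B}$. In both cases the left-hand side is $\Pr_{\vz \sim \D(\vy \ominus i)}[\Lin(\vz) = \Lin(\vx)]$ and the right-hand side is $\Pr_{\vz \sim \D(\vy)}[\Lin(\vz) = \Lin(\vx)]$, which is what we want. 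I do not expect a genuine obstacle here: all the substance is inherited from~\Cref{lemma:positive-score}, and the only thing that needs care is the bookkeeping — checking that conditioning on $z_i = x_i$ picks out $\mathcal{A}$ exactly when $x_i = 1$, and that the reversed hypothesis $s_i \leq 0$ orients the comparison between $\mathcal{A}$ and $\mathcal{B}$ so that the convex combination $p_i \mathcal{A} + (1-p_i)\mathcal{B}$ dominates the one matching $\vy$.
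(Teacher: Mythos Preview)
Your proposal is correct and is precisely the approach the paper takes: the paper's proof of \Cref{lemma:negative-score} is the one-line remark that it is obtained from the proof of \Cref{lemma:positive-score} ``but reversing signs,'' and your write-up carries out exactly that sign-flipped replay, with the case split on $x_i$ landing on the right inequalities between $\mathcal{A}$ and $\mathcal{B}$.
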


Now, in order to prove~\Cref{lemma:greedy}, which makes two claims, we will split it into two separate lemmas. 
\begin{lemma}[Part 1 of~\Cref{lemma:greedy}]\label{lemma:greedy-1}
    Given a linear model $\Lin$, and an instance $\vx$, if $\vy^{(0)}, \ldots, \vy^{(d)}$ are the partial instances of $\vx$ such that $\vy^{(k)} \subseteq \vx$ is defined only in the top $k$ features of maximum score, then
    \[ 
        \Pr_{\vz \sim \U(\vy^{(k+1)})}[\Lin(\vz) = \Lin(\vx)] \geq \Pr_{\vz \sim \U(\vy^{(k)})}[\Lin(\vz) = \Lin(\vx)]
    \]
    for all $k \in \{0, \ldots, d-1\}$, and naturally, 
    \[ 
    \Pr_{\vz \sim \U(\vy^{(d)})}[\Lin(\vz) = \Lin(\vx)] = 1.
    \]
\end{lemma}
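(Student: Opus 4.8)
The plan is to reduce the statement to the two one-step lemmas already in hand, \Cref{lemma:positive-score} and \Cref{lemma:negative-score}, specialized to $\D = \U$. First I would fix an ordering of the features $i_1, i_2, \ldots, i_d$ by non-increasing score (breaking ties arbitrarily but consistently), so that $\vy^{(k)}$ is exactly the restriction of $\vx$ to $\{i_1, \ldots, i_k\}$ and, crucially, $\vy^{(k+1)} = \vy^{(k)} \oplus i_{k+1}$ while $\vy^{(k)} = \vy^{(k+1)} \ominus i_{k+1}$. Write $v_k := \Pr_{\vz \sim \U(\vy^{(k)})}[\Lin(\vz) = \Lin(\vx)]$, and let $q$ be the number of features with $s_i \geq 0$, so that $i_1, \ldots, i_q$ are precisely the features of nonnegative score.

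For the ``positive side'' range $0 \leq k < q$, the feature $i_{k+1}$ added in passing from $\vy^{(k)}$ to $\vy^{(k+1)}$ has $s_{i_{k+1}} \geq 0$, so \Cref{lemma:positive-score} applies verbatim and gives $v_{k+1} = \Pr_{\vz \sim \U(\vy^{(k)} \oplus i_{k+1})}[\Lin(\vz)=\Lin(\vx)] \geq v_k$. This settles the inequality whenever $i_{k+1}$ has nonnegative score.

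For $k \geq q$ I would argue that the sequence is in fact constantly $1$. Start from $\vy^{(d)} = \vx$, which trivially satisfies $v_d = 1$, and walk downward: for each $j$ from $d$ down to $q+1$ we have $\vy^{(j-1)} = \vy^{(j)} \ominus i_j$ with $s_{i_j} < 0 \leq 0$, so \Cref{lemma:negative-score} yields $v_{j-1} \geq v_j$; since $v_j \leq 1$ always and the chain begins at $v_d = 1$, a one-line downward induction gives $v_q = v_{q+1} = \cdots = v_d = 1$. In particular $v_{k+1} \geq v_k$ holds on this range too (both sides equal $1$), and the closing claim $v_d = 1$ is immediate from $\vy^{(d)} = \vx$. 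Combining the two ranges covers all $k \in \{0, \ldots, d-1\}$.

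The only genuine subtlety — and the step I would flag as the heart of the argument — is that \Cref{lemma:positive-score} is of no use once the greedy procedure starts appending negative-score features, so monotonicity there cannot come from the same ``add one feature'' move; instead one must notice that by the time the procedure reaches the negative-score features the acceptance probability has already saturated at $1$, which is exactly what the top-down chain of \Cref{lemma:negative-score} certifies. Everything else is bookkeeping about the ordering and about ties in the scores.
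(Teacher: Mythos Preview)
Your proof is correct and follows essentially the same approach as the paper: split according to the sign of the next feature's score, invoke \Cref{lemma:positive-score} when $s_{i_{k+1}} \geq 0$, and for the negative-score tail use the chain of applications of \Cref{lemma:negative-score} anchored at $\vy^{(d)}=\vx$ to force all those probabilities to equal $1$. The paper organizes the case split per index $k$ rather than by the two ranges $k<q$ and $k\geq q$, but the content is identical.
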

\begin{proof}[Proof of~\Cref{lemma:greedy-1}]
Let us assume without loss of generality that the features are already sorted decreasingly in terms of score, so \[s_1 \geq s_2 \geq \cdots \geq s_d.\]
This way, we have that $\vy^{(k)} \subseteq \vx$ is defined as follows:
\[ 
    y^{(k)}_i = \begin{cases}
        x_i & \text{if } i \leq k\\
        \bot & \text{otherwise}.
    \end{cases}
\]
The proof now requires considering two cases. First, if $s_{k+1} \geq 0$, then we can apply~\Cref{lemma:positive-score} to conclude that
\[ 
    \Pr_{\vz \sim \U(\vy^{(k+1)})}[\Lin(\vz) = \Lin(\vx)] \geq \Pr_{\vz \sim \U(\vy^{(k)})}[\Lin(\vz) = \Lin(\vx)].
\]
We will now show that if $s_{k+1} < 0$, then 
\[ 
    \Pr_{\vz \sim \U(\vy^{(k+1)})}[\Lin(\vz) = \Lin(\vx)] = 1,
\]
which will be enough to conclude. Indeed, as $s_{k+1} < 0$, we have that 
\[
   0 > s_{k+2} \geq s_{k+3} \geq \cdots \geq s_d, 
\]
from where we can repeatedly apply~\Cref{lemma:negative-score} to deduce 
\[ 
    \Pr_{\vz \sim \U(\vy^{(k+1)})}[\Lin(\vz) = \Lin(\vx)]  \geq \Pr_{\vz \sim \U(\vy^{(k+2)})}[\Lin(\vz) = \Lin(\vx)] \geq \cdots \geq \Pr_{\vz \sim \U(\vy^{(d)})}[\Lin(\vz) = \Lin(\vx)] = 1.
\]
This concludes the proof.
\end{proof}
\begin{lemma}[Part 2 of~\Cref{lemma:greedy}]\label{lemma:greedy-2}
    Given a linear model $\Lin$, and an instance $\vx$, if $\vy^{(0)}, \ldots, \vy^{(d)}$ are the partial instances of $\vx$ such that $\vy^{(k)} \subseteq \vx$ is defined only in the top $k$ features of maximum score, then $\opt(\Lin, \vx, \delta) = k$ if and only if $\vy^{(k)}$ is a $\delta$-SR for $\vx$ but $\vy^{(k-1)}$ is not. 
\end{lemma}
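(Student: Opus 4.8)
The plan is to reduce acceptance probabilities to a transparent statement about the scores $s_i$ of~\Cref{def:scores} and then finish with an exchange argument, reusing~\Cref{lemma:greedy-1} for monotonicity. Assume without loss of generality that $\Lin(\vx)=1$ (the case $\Lin(\vx)=0$ is symmetric, the factor $(2\Lin(\vx)-1)$ in~\Cref{def:scores} being exactly what makes the two cases line up). Given a partial instance $\vy\subseteq\vx$ with set of defined features $D\subseteq[d]$, I would perform the change of variables $\tilde Z_i:=z_i$ if $x_i=1$ and $\tilde Z_i:=1-z_i$ if $x_i=0$ for $\vz\sim\U(\vy)$; each $\tilde Z_i$ stays uniform on $\{0,1\}$ and the $\tilde Z_i$ remain independent. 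A short computation then shows that there is a constant $C$ depending only on $(\Lin,\vx)$ (concretely $C=t-\sum_{i\,:\,x_i=0}w_i$) such that
\[
    \Pr_{\vz\sim\U(\vy)}[\Lin(\vz)=\Lin(\vx)] \;=\; \Pr\!\left[\sum_{i\in D}s_i+\sum_{i\notin D}s_i\tilde Z_i\;\geq\;C\right],
\]
i.e.\ every defined feature contributes its score with certainty, while every undefined feature contributes its score with probability $\tfrac12$ and $0$ otherwise. (This also re-derives~\Cref{lemma:positive-score} and~\Cref{lemma:negative-score} instantly.)

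\textbf{Step 2 (swap lemma).} Next I would prove: if $a\in D$, $b\notin D$, and $s_b\geq s_a$, then moving the ``pin'' from $a$ to $b$ (i.e.\ passing from defined set $D$ to $(D\setminus\{a\})\cup\{b\}$) does not decrease the acceptance probability. Using Step~1, condition on the bits $\tilde Z_i$ for all $i\notin D\cup\{b\}$ and absorb the certain contributions $\sum_{i\in D\setminus\{a\}}s_i$ into a single random quantity $\sigma$. The two probabilities then become
\[
    \tfrac12\Pr[\sigma+s_a+s_b\geq C]+\tfrac12\Pr[\sigma+s_a\geq C]
    \quad\text{and}\quad
    \tfrac12\Pr[\sigma+s_a+s_b\geq C]+\tfrac12\Pr[\sigma+s_b\geq C];
\]
the first terms cancel, and $s_b\geq s_a$ gives $\Pr[\sigma+s_b\geq C]\geq\Pr[\sigma+s_a\geq C]$, as wanted.

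\textbf{Step 3 (exchange argument and conclusion).} Sorting features so that $s_1\geq\cdots\geq s_d$, any $\vy\subseteq\vx$ with exactly $k$ defined features can be turned into $\vy^{(k)}$ by repeatedly picking a defined index $a>k$ and an undefined index $b\leq k$ (such a pair exists unless the defined set is $\{1,\dots,k\}$), observing $s_b\geq s_k\geq s_a$, and applying the swap lemma; each step strictly shrinks the number of defined indices exceeding $k$, so the process halts at $\vy^{(k)}$ without ever decreasing the acceptance probability. Hence $v_k:=\Pr_{\vz\sim\U(\vy^{(k)})}[\Lin(\vz)=\Lin(\vx)]$ is the maximum acceptance probability over all size-$k$ partial instances of $\vx$. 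Combining this with~\Cref{lemma:greedy-1} (which gives $v_0\leq v_1\leq\cdots\leq v_d=1$) yields $\opt(\Lin,\vx,\delta)=\min\{k:v_k\geq\delta\}$: if $v_k\geq\delta$ then $\vy^{(k)}$ is a $\delta$-SR of size $k$, and conversely any $\delta$-SR of size $j$ forces $v_j\geq\delta$. Since $(v_k)$ is non-decreasing, ``$v_k\geq\delta$ and $v_{k-1}<\delta$'' is equivalent to ``$\vy^{(k)}$ is a $\delta$-SR but $\vy^{(k-1)}$ is not'' (with $k=0$ read as ``$\vy^{(0)}$ is a $\delta$-SR''), which is exactly the claim.

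The main obstacle is Step~1: carefully carrying out the $\tilde Z_i$ substitution, pinning down the constant $C$, and keeping the $\Lin(\vx)=0$ case aligned with the $\Lin(\vx)=1$ case. Once the reformulation is in hand, Steps~2 and~3 are routine, and the tie-breaking in the definition of $\vy^{(k)}$ is harmless, since swapping two equal-score features leaves the acceptance probability unchanged by the swap lemma.
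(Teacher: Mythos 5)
Your proof is correct and follows essentially the same route as the paper: the key step is showing that $\vy^{(k)}$ maximizes the acceptance probability among all partial instances of $\vx$ with exactly $k$ defined features (the paper's \Cref{lemma:greedy-3}), established by a pairwise swap/exchange argument, and then combined with the monotonicity from \Cref{lemma:greedy-1} to get $\opt(\Lin,\vx,\delta)=\min\{k : v_k\geq\delta\}$. Your change of variables $\tilde Z_i$ is a nice streamlining that collapses the paper's four-way case analysis on $(x_i,x_j)$ into the single inequality $\Pr[\sigma+s_b\geq C]\geq\Pr[\sigma+s_a\geq C]$, but the underlying argument is the same.
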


In order to prove~\Cref{lemma:greedy-2}, we will use a separate lemma. Let us define, for every $k \in [d]$ the set $P_k$ as the set of partial instances $\vy \subseteq \vx$ such that $\vy$ has $k$ defined features.
\begin{lemma}\label{lemma:greedy-3}
    For any $k \in [d]$, we have 
    \[
        \Pr_{\vz \sim \U(\vy^{(k)})}[\Lin(\vz) = \Lin(\vx)] = \max_{\vy \in P_k} \Pr_{\vz \sim \U(\vy)}[\Lin(\vz) = \Lin(\vx)].
    \]
\end{lemma}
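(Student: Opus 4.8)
The plan is to reduce the comparison among all size-$k$ partial instances to a one-dimensional stochastic-dominance statement and then run an exchange argument. First I would assume without loss of generality that $\Lin(\vx) = 1$ (the case $\Lin(\vx) = 0$ being symmetric) and set $C := \vw \cdot \vx - t \geq 0$. Given $\vy \in P_k$ with set of defined features $S$ (so $|S| = k$), a sample $\vz \sim \U(\vy)$ agrees with $\vx$ on $S$ and is an independent fair coin on each coordinate outside $S$. The key observation — the one already underlying~\Cref{def:scores} — is that flipping coordinate $j$ from $x_j$ to $1 - x_j$ changes $\vw \cdot \vz$ by exactly $-s_j$. Hence, writing $B_j := [z_j \neq x_j]$ for $j \notin S$, which are i.i.d.\ $\mathrm{Bernoulli}(1/2)$, we obtain
\[
    \Pr_{\vz \sim \U(\vy)}[\Lin(\vz) = \Lin(\vx)] = \Pr\!\left[\sum_{j \notin S} B_j s_j \leq C\right].
\]
So the lemma is equivalent to the claim that, over all $(d-k)$-subsets $\bar S = [d]\setminus S$, the quantity $\Pr[\sum_{j \in \bar S} B_j s_j \leq C]$ is maximized by taking $\bar S$ to consist of the $d-k$ features of \emph{smallest} score, i.e.\ $S$ the top $k$ by score.

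The direction ``$\leq$'' is immediate since $\vy^{(k)} \in P_k$, so it remains to show $\Pr_{\vz \sim \U(\vy^{(k)})}[\Lin(\vz)=\Lin(\vx)] \geq \Pr_{\vz \sim \U(\vy)}[\Lin(\vz)=\Lin(\vx)]$ for every $\vy \in P_k$, which I would prove by a swap argument. Let $S$ be the defined set of $\vy$; if $S$ is not a set of $k$ features of maximum score, then $\min_{i \in S} s_i < \max_{j \notin S} s_j$, so there exist $a \in S$ and $b \notin S$ with $s_a < s_b$. Let $S' := (S \setminus \{a\}) \cup \{b\}$, let $\vy' \in P_k$ be the partial instance defined exactly on $S'$ (agreeing with $\vx$ there), and write $R := \bar S \setminus \{b\} = \overline{S'} \setminus \{a\}$ and $W := \sum_{j \in R} B_j s_j$. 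Conditioning on the free coin at $b$ (resp.\ $a$) and using independence of $W$ from it,
\[
    \Pr_{\vz \sim \U(\vy')}[\Lin(\vz)=\Lin(\vx)] - \Pr_{\vz \sim \U(\vy)}[\Lin(\vz)=\Lin(\vx)] = \tfrac{1}{2}\Big(\Pr[W \leq C - s_a] - \Pr[W \leq C - s_b]\Big) \geq 0,
\]
since $s_a < s_b$ gives $C - s_b < C - s_a$ and hence $\{W \leq C - s_b\} \subseteq \{W \leq C - s_a\}$. Thus the swap never decreases the acceptance probability while strictly increasing $\sum_{i \in S} s_i$; as there are finitely many $k$-subsets, iterating must terminate, and it can only terminate at a set with no improving swap, i.e.\ a set of $k$ features of maximum score.

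To finish, I would note that any two ``top-$k$'' sets differ only by swaps of features of equal score, for which the displayed difference is exactly $0$; hence all such sets — $\vy^{(k)}$ among them, under whatever fixed tie-breaking the statement uses — realize one and the same value, which therefore equals $\max_{\vy \in P_k} \Pr_{\vz \sim \U(\vy)}[\Lin(\vz)=\Lin(\vx)]$. The conceptual heart is the one-line reformulation through the scores; after that, all the work is done by the monotonicity $c \mapsto \Pr[W \leq c]$, and the only thing requiring care is the bookkeeping around ties. I do not expect a genuinely hard step — in particular no concentration or binomial estimates are needed for this lemma, in contrast to~\Cref{prop:delta-sr-size}.
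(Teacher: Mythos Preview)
Your proof is correct and follows the same exchange-argument skeleton as the paper: both show that swapping a defined feature $a$ for an undefined feature $b$ with $s_a \leq s_b$ cannot decrease the acceptance probability, and then iterate. The difference is in how the swap inequality is established. The paper works directly with the weights and threshold and splits into four cases according to $(x_i,x_j)\in\{0,1\}^2$. Your reformulation
\[
\Pr_{\vz\sim\U(\vy)}[\Lin(\vz)=\Lin(\vx)] \;=\; \Pr\!\left[\sum_{j\notin S} B_j s_j \le C\right]
\]
absorbs all the sign bookkeeping into the scores $s_j$ up front, so the swap inequality reduces to the single monotonicity observation $\Pr[W\le C-s_a]\ge\Pr[W\le C-s_b]$ when $s_a<s_b$. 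This is genuinely cleaner: it replaces the paper's four-case verification by one line, and it makes transparent why ties among scores yield equal probabilities (so all top-$k$ sets, in particular $\vy^{(k)}$, realize the maximum). The paper's formulation, on the other hand, is phrased so as to plug directly into the $\oplus/\ominus$ notation used in~\Cref{lemma:positive-score,lemma:negative-score}, which it reuses for the product-distribution result in~\Cref{thm:locally-minimal}; your reformulation is specific to the uniform case because it relies on $B_j$ being $\mathrm{Bernoulli}(1/2)$.
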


Let us show immediately how~\Cref{lemma:greedy-2} can be proved using~\Cref{lemma:greedy-3}.
\begin{proof}[Proof of~\Cref{lemma:greedy-2}]
  For the forward direction, assume that $\opt(\Lin, \vx, \delta) = k$. Then, by definition, we have that there exists a $\delta$-SR $\vy^\star$ for $\vx$ such that $\vy^\star$ has $k$ defined features.
    By~\Cref{lemma:greedy-3}, we have that 
    \[ 
        \Pr_{\vz \sim \U(\vy^{(k)})}\left[\Lin(\vz) = \Lin(\vx)\right] \geq \Pr_{\vz \sim \U(\vy^\star)}\left[\Lin(\vz) = \Lin(\vx)\right] \geq \delta,
    \]
    and thus $\vy^{(k)}$ is a $\delta$-SR for $\vx$. On the other hand, if $\vy^{(k-1)}$ were to be a $\delta$-SR for $\vx$, then we would have $\opt(\Lin, \vx, \delta) \leq k-1$, a contradiction.
    For the backward direction, assume that $\vy^{(k)}$ is a $\delta$-SR for $\vx$ but $\vy^{(k-1)}$ is not. Then, by~\Cref{lemma:greedy-3}, we have that 
    \[
        \delta > \Pr_{\vz \sim \U(\vy^{(k-1)})}[\Lin(\vz) = \Lin(\vx)] = \max_{\vy \in P_{k-1}} \Pr_{\vz \sim \U(\vy)}[\Lin(\vz) = \Lin(\vx)],
    \]
    from where $\opt(\Lin, \vx, \delta) > k-1$, and because $\vy^{(k)}$ is a $\delta$-SR for $\vx$, we have $\opt(\Lin, \vx, \delta) \leq k$; we conclude that $\opt(\Lin, \vx, \delta) = k$.
\end{proof}

It thus only remains to prove~\Cref{lemma:greedy-3}.
\begin{proof}[Proof of~\Cref{lemma:greedy-3}]
Let $w_1, \ldots, w_d$ be the weights of $\Lin$, and $t$ its threshold. Let us use the $\oplus, \ominus$ notation defined in~\Cref{lemma:positive-score,lemma:negative-score}. 
   We will prove something slightly stronger than~\Cref{lemma:greedy-3}: that if $i$ and $j$ are features such that $s_i \leq s_j$, then for any partial instance $\vy$ such that $y_i \neq \bot$ and $y_j = \bot$, we have
    \[
        \Pr_{\vz \sim \U(\vy \ominus i \oplus j)}[\Lin(\vz) = \Lin(\vx)] \geq \Pr_{\vz \sim \U(\vy)}[\Lin(\vz) = \Lin(\vx)].
    \]
    If we prove this, then we can apply it repeatedly to deduce~\Cref{lemma:greedy-3}. To prove the claim, we start by defining
    \[ 
        S = \{ \ell \mid y_\ell \neq \bot \} \setminus \{i \},
    \]
    and 
    \[ 
        t' = t - \sum_{\ell \in S} y_\ell w_\ell.
    \]
   We will also assume without loss of generality that $\Lin(\vx) = 1$ since the other case is analogous. We can then rewrite the probabilities of interest as follows, using notation $\bar{S} := [d] \setminus S$:
   \[
    \Pr_{\vz \sim \U(\vy \ominus i \oplus j)}[\Lin(\vz) = \Lin(\vx)] = \Pr_{\vz \sim \U(\vy \ominus i \oplus j)}\left[\sum_{\ell \in \bar{S} \setminus \{i, j\}} z_\ell w_\ell  + x_j w_j + z_i w_i\geq t'\right],
   \]
   \[
    \Pr_{\vz \sim \U(\vy)}[\Lin(\vz) = \Lin(\vx)] = \Pr_{\vz \sim \U(\vy)}\left[\sum_{\ell \in \bar{S} \setminus \{i, j\}} z_\ell w_\ell  + x_i w_i + z_j w_j\geq t'\right].
   \]
    Let us write $t^\star = t' - \sum_{\ell \in \bar{S} \setminus \{i, j\}} z_\ell w_\ell$, and note that $t^\star$ is a random variable. With this notation, it remains to prove that 

    \begin{align*}
        &\Pr_{z_i, t^\star}[z_i w_i + x_j w_j \geq t^\star] \geq \Pr_{z_i, t^\star}[z_j w_j + x_i w_i \geq t^\star] \\
        \iff& \frac{1}{2}\Pr_{t^\star}[w_i + x_j w_j \geq t^\star] + \frac{1}{2}\Pr_{t^\star}[x_j w_j \geq t^\star] \geq \frac{1}{2}\Pr_{t^\star}[w_j + x_i w_i \geq t^\star] + \frac{1}{2}\Pr_{t^\star}[x_i w_i \geq t^\star] \\
        \iff& \Pr_{t^\star}[w_i + x_j w_j \geq t^\star] + \Pr_{t^\star}[x_j w_j \geq t^\star] \geq \Pr_{t^\star}[w_j + x_i w_i \geq t^\star] + \Pr_{t^\star}[x_i w_i \geq t^\star].
    \end{align*}
    We will prove the last inequality by cases, recalling that $s_j \geq s_i$ and thus $w_j (2x_j - 1) \geq w_i (2x_i - 1)$.
    \begin{itemize}
        \item (\textbf{Case 1:} $x_i = 1, x_j = 1$) The desired inequality is
        \begin{align*}
            \Pr_{t^\star}[w_i + w_j \geq t^\star] + \Pr_{t^\star}[w_j \geq t^\star] &\geq \Pr_{t^\star}[w_j + w_i \geq t^\star] + \Pr_{t^\star}[w_i \geq t^\star]\\
            \iff \Pr_{t^\star}[w_j \geq t^\star] &\geq \Pr_{t^\star}[w_i \geq t^\star],
        \end{align*}
        which is true since $s_j \geq s_i$ implies $w_j \geq w_i$ given $x_i = x_j = 1$.
        \item (\textbf{Case 2:} $x_i = 1, x_j = 0$) The desired inequality is
        \begin{align*}
            \Pr_{t^\star}[w_i \geq t^\star] + \Pr_{t^\star}[0\geq t^\star] &\geq \Pr_{t^\star}[w_j + w_i \geq t^\star] + \Pr_{t^\star}[w_i \geq t^\star]\\
            \iff \Pr_{t^\star}[0 \geq t^\star] &\geq \Pr_{t^\star}[w_j + w_i \geq t^\star],
        \end{align*}
        which is true since $s_j \geq s_i$ implies $-w_j \geq w_i$ given $x_i = 1, x_j = 0$, and thus $w_i + w_j \leq 0$.
        \item (\textbf{Case 3:} $x_i = 0, x_j = 1$) The desired inequality is 
        \begin{align*}
            \Pr_{t^\star}[w_i + w_j \geq t^\star] + \Pr_{t^\star}[w_j \geq t^\star] &\geq \Pr_{t^\star}[w_j \geq t^\star] + \Pr_{t^\star}[0 \geq t^\star],\\
            \iff \Pr_{t^\star}[w_i + w_j \geq t^\star] &\geq \Pr_{t^\star}[0\geq t^\star],
        \end{align*}
        which is true since $s_j \geq s_i$ implies $w_j \geq -w_i$ given $x_i = 0, x_j = 1$, and thus $w_i + w_j \geq 0$.
        \item (\textbf{Case 4:} $x_i = 0, x_j = 0$) The desired inequality is
        \begin{align*}
            \Pr_{t^\star}[w_i \geq t^\star] + \Pr_{t^\star}[0 \geq t^\star] &\geq \Pr_{t^\star}[w_j \geq t^\star] + \Pr_{t^\star}[0 \geq t^\star],\\
            \iff \Pr_{t^\star}[w_i \geq t^\star] &\geq \Pr_{t^\star}[w_j \geq t^\star],
        \end{align*}
        which is true since $s_j \geq s_i$ implies $-w_j \geq -w_i$ given $x_i = x_j = 0$, and thus $w_i \geq w_j$.
    \end{itemize}
\end{proof}
\Cref{lemma:greedy} now follows directly from~\Cref{lemma:greedy-1} and~\Cref{lemma:greedy-2}, and the sketch proof of~\Cref{thm:locally-minimal} can be completed as we now have proved~\Cref{lemma:positive-score} and~\Cref{lemma:negative-score}.

\end{document}